\definecolor{lightgreen}{rgb}{0.56, 0.93, 0.56}
\DeclareMathOperator*{\argmax}{arg\,max}
\newcommand{\SD}{\mathrm{SEM}}
\newcommand{\base}{\mathrm{base}}
\title{Simplicial Embeddings in Self-Supervised \\ Learning and Downstream Classification}
\author{%
  Samuel Lavoie$^{\diamond}$,
  Christos Tsirigotis$^{\diamond}$,
  Max Schwarzer$^{\diamond}$, 
  Ankit Vani$^{\diamond}$, \\ 
  \textbf{Michael Noukhovitch}$^{\diamond}$,
  \textbf{Kenji Kawaguchi}$^{\ddagger}$, 
  \textbf{Aaron Courville}$^{\diamond\clubsuit}$\\
  $^\diamond$ Mila, Université de Montréal, $^\ddagger$ National University of Singapore, $^\clubsuit$ CIFAR Fellow\\
  \texttt{\{samuel.lavoie.m,mnoukhov,aaron.courville\}@gmail.com}\\ \texttt{\{christos.tsirigotis,max.schwarzer,ankit.vani\}@umontreal.ca}\\\texttt{kenji@comp.nus.edu.sg}
}
\begin{document}

\maketitle

\begin{abstract}
Simplicial Embeddings (SEM) are representations learned through self-supervised learning (SSL), wherein a representation is projected into $L$ simplices of $V$ dimensions each using a softmax operation. This procedure conditions the representation onto a constrained space during pre-training and imparts an inductive bias for group sparsity. For downstream classification, we formally prove that the SEM representation leads to better generalization than an unnormalized representation.
Furthermore, we empirically demonstrate that SSL methods trained with SEMs have improved generalization on natural image datasets such as CIFAR-100 and ImageNet. Finally, when used in a downstream classification task, we show that SEM features exhibit emergent semantic coherence where small groups of learned features are distinctly predictive of semantically-relevant classes.
\end{abstract}

\section{Introduction}

\begin{wrapfigure}[15]{r}{6cm}
\centering
\vspace{-.45cm}
\includegraphics[width=\linewidth]{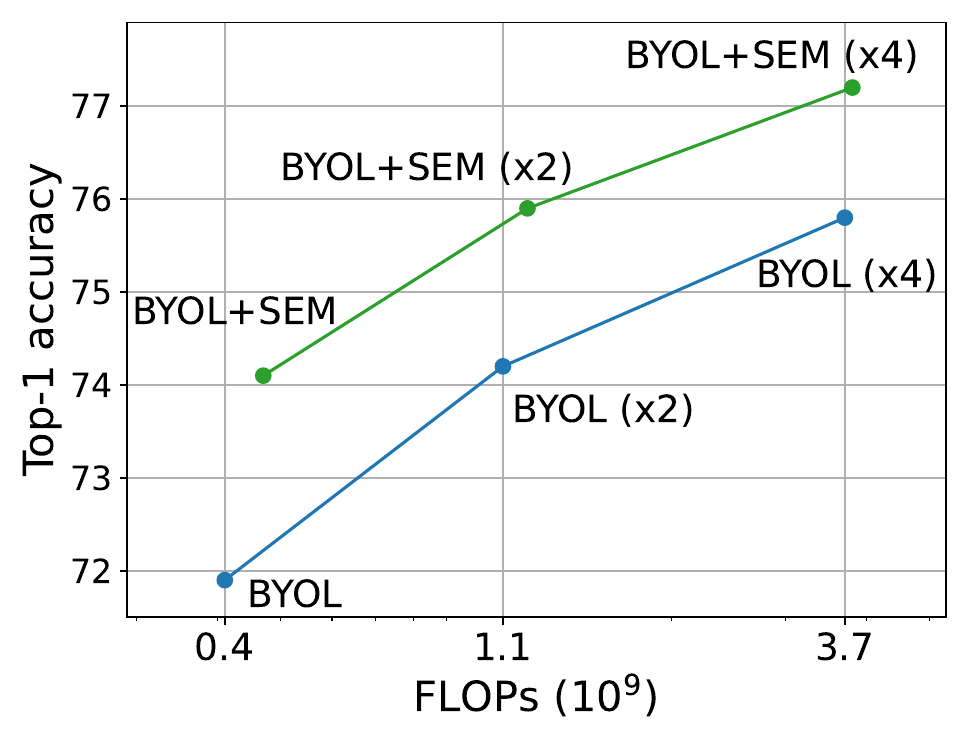}
\vskip -.3cm
\caption{Linear probe accuracy of \textsc{BYOL} and \textsc{BYOL} + \textsc{SEM} on ImageNet trained for \emph{200 epochs} with a ResNet-50 architecture.}
\label{fig:in-scaling}
\end{wrapfigure}

Self-supervised learning (SSL) is an emerging family of methods that aim to learn representations of data without manual supervision, such as class labels. Recent works ~\citep{hjelm2018learning,grill2020byol,saeed_contrastive_2020,you_graph_nodate} learn dense representations that can solve complex tasks by simply fitting a linear model on top of the learned representation.
While SSL is already highly effective, we show that changing the \textit{type} of representation learned can improve both the performance and interpretability of these methods.


For this we draw inspiration from overcomplete representations: representations of an input that are non-unique combinations of a number of basis vectors greater than the input's dimensionality~\citep{lewicki2000overcomp}. Mostly studied in the context of the sparse coding literature~\citep{gregor2010sparsecode,goodfellow2012sparse,olshausen2013overcompletesparse}, sparse overcomplete representations have been shown to increase stability in the presence of noise~\citep{dohono2006stablesparse}, have applications in neuroscience~\citep{Olshausen1996sparsevisual,lee2007sparsev2}, and lead to more interpretable representations~\citep{Murphy2012LearningEA,fyshe2015compositional,faruqui-etal-2015-sparse}. However, the choice of basis vectors is generally assumed to be learned using traditional methods such as ICA~\citep{teh2003energysparse} or fitting linear models~\citep{lewicki2000overcomp}, limiting the expressive power of the encoding function.


In this work, we show that SSL may be used to learn sparse and overcomplete representations. Prior work has considered sparse representation but not sparse and overcomplete representation learning with SSL; for example, \citet{dess2021comp} propose to discretize the output of the encoder in a SSL model using Gumbel-Softmax~\citep{jang2017gumbel}. However, we show that discretization during pre-training is not necessary to achieve a sparse representation. Instead, we propose to project the encoder's output into $L$ vectors of $V$ dimensions onto which we apply a softmax function to impart an inductive bias toward sparse vectors~\citep{correia-etal-2019-adaptively,goyal2022coordination}, also alleviating the need to use biased or high-variance gradient estimators to train the encoder. We refer to this embedding as Simplicial Embeddings (SEM), as the softmax functions map the unnormalized representations onto $L$ simplices. The procedure to induce SEM is simple, efficient, and generally applicable.

The SSL pre-training phase, used with SEM, learns a set of $L$ \emph{approximately}-sparse vectors. Key to controlling the inductive bias of SEM during pre-training is the softmax temperature parameter: the lower the temperature, the stronger the bias toward sparsity. Consistent with earlier attempts at sparse representation learning~\citep{coates2011sparse}, we find that the optimal sparsity for pre-training need not match the optimal level for downstream learning.

For downstream classification, we may discretize the learned representation by, for example, taking the argmax for each simplex. But, we can also use SEM to control the representation's expressivity via the softmax's temperature. We provide a theoretical bound showing that the expected error follows a trade-off between the training error and the representations' expressivity, controlled by the softmax's temperature used to normalize the representation for downstream classification. Our bound also shows improved downstream generalization as we increase $L$ and $V$ for SEM.

SEM is generally applicable to recent SSL methods. Applying it to seven different SSL methods~\citep{chen2020simple,he_momentum_2020,grill2020byol,caron2020swav,caron_emerging_2021,zbontar2021barlow,bardes2022vicreg}, we find accuracy increases of 2\% to 4\% on CIFAR-100. 
We observe monotonic improvement as we increase the number of vectors $L$, showing the benefit of the overcomplete representations learned by SEM, while this improvement is absent when we do not use softmax normalization. When training a SSL method with SEM on ImageNet we also observe improvements on in-distribution compared to the baseline (Figure~\ref{fig:in-scaling}). We also observe improvement on out-of-distribution test sets, semi-supervised learning benchmark and transfer learning datasets, demonstrating the potential of SEM for large scale applications. Finally, we find that SEM learns features that are closely aligned to the semantic categories in the data. This demonstrates that SEM learns disentangled and interpretable representations, as previously observed in overcomplete representations~\citep{faruqui-etal-2015-sparse}.

\section{Related work}

The softmax operation has been used in other contexts, notably as an architectural component for models to attend to context-dependent queries via, for example, an attention mechanism~\citep{bahdanau_neural_2016,vaswani_attention_2017,correia-etal-2019-adaptively,goyal2022coordination}, a mixture of experts~\citep{jordan_1993_mixture} or memory augmented networks~\citep{graves_neural_2014}. This operation is also used for the computation of several SSL objectives such as InfoNCE~\citep{oord_2018_cpc,hjelm2018learning}, and as a normalization of the output to compute the objective in DINO and SWaV~\citep{caron2020swav,caron_emerging_2021}. Different from these, our method places the softmax at the output of an encoder to constrain the representation into a set of $L$ sparse vectors.

Similar to our approach, other architectural constraints such as Dropout~\citep{srivastava_2014_dropout}, BatchNorm~\citep{ioffe2015bn} and LayerNorm~\citep{ba_2016_layernorm} also improve the training 
of large neural networks. However, contrary to SEMs, they are not used to induce sparsity on the representation or control its expressivity for downstream tasks. Closer to our work,~\citet{liu_discrete-valued_2021} propose to constrain the expressivity of the representation of a neural network with a set of discrete-valued symbols obtained using a set of Vector Quantized~\citep{oord_neural_2018} bottlenecks. Similarly, \citet{dessi2021interpretable} propose a communication game with a discrete bottleneck. The idea of discretizing the encoder's output is similar to using SEM vectors that are one-hot (e.g. temperature $= 0$) and only one symbol (e.g. $L=1,V=2048$). In our work, we find success in removing the hard-discretization and having $L>1$, which can be interepreted as combining several symbols. 


\begin{figure*}
\begin{subfigure}{0.45\linewidth}
    \centering
    \includegraphics[width=\linewidth]{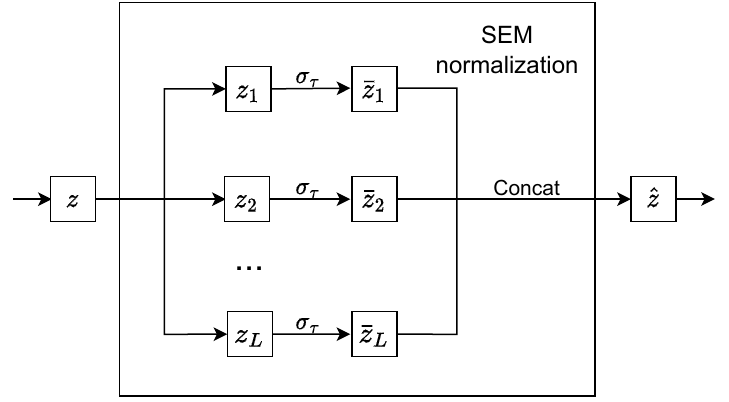}
    \caption{}
    \label{fig:sem-depict}
\end{subfigure}
\enskip
\begin{subfigure}{0.47\linewidth}
    \centering
    \vspace{0.35em}
    \includegraphics[width=\linewidth]{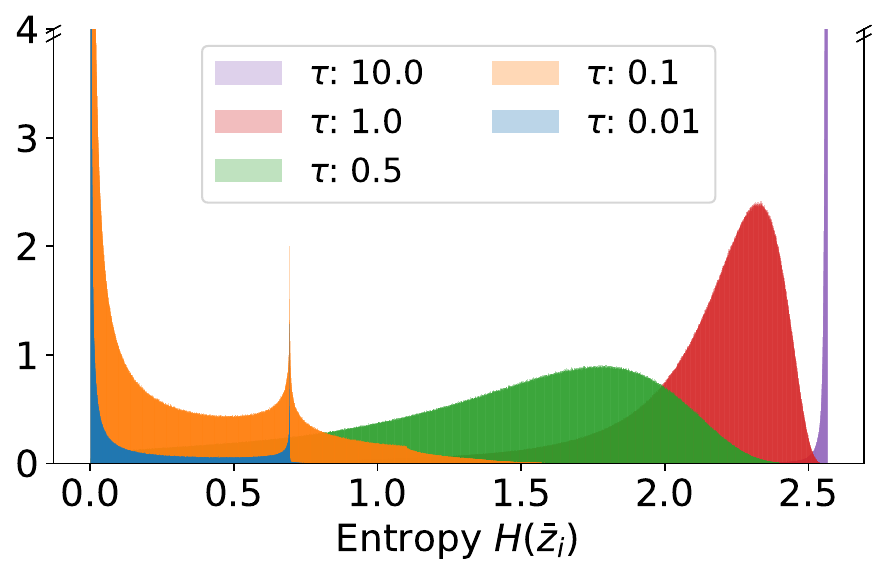}
    \caption{}
    \label{fig:hist_ent}
\end{subfigure}
\hfill
\begin{subfigure}{\linewidth}
    \centering
    \hspace{3pt}
    \includegraphics[width=0.95\linewidth]{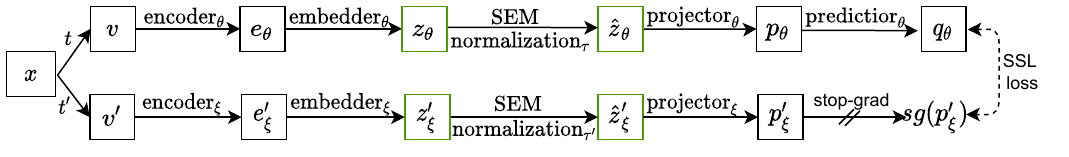}
    \caption{}
    \label{fig:sem-byol}
\end{subfigure}
\hfill
\caption{\textbf{(a)} Procedure to obtain Simplicial Embeddings (SEM). A matrix $z\in\mathbb{R}^{L\times V}$ contains $L$ vectors $z_i\in\mathbb{R}^V$. The vectors $z_i$ are normalized with $\sigma_\tau$, the softmax operation with temperature $\tau$. The normalized vectors are concatenated into the vector $\hat{z}$. \textbf{(b)} Normalized histogram of the entropies $H(\bar{z}_i)$ of each simplex $\bar{z}_i$ for the sample in CIFAR's training dataset at the end of pre-training with various $\tau$. The peak at $\ln(2)$ for $\tau=0.01$ and $\tau=0.1$ are a large number of simplices with two elements close to 0.5. 
\textbf{(c)}~Integration of SEM with BYOL~\citep{grill2020byol}. The encoder outputs a latent vector which is embedded into the matrix $z \in \mathbb{R}^{L\times V}$ and then transformed into SEM.}
\label{fig:sdb}
\end{figure*}

\section{Simplicial Embeddings}
\label{method}

Simplicial Embeddings (SEM) are representations that can be integrated easily into a contrastive learning model~\citep{hjelm2018learning,chen2020simple}, the \textsc{BYOL} method~\citep{grill2020byol}, and other SSL methods~\citep{caron2020swav,caron_emerging_2021,zbontar2021barlow}. For example, in BYOL, we insert the SEM after the encoder and before the projector and the rest is unchanged as shown in Figure~\ref{fig:sem-byol}. In this figure, $t$ and $t'$ are augmentations defined by the practitioner, $\xi$ are parameters of the target network that are updated as moving average of the parameters $\theta$ of the online networks trained with SGD. So, $\xi$ are updated as follow: $\xi\leftarrow\alpha\xi + (1-\alpha)\theta$, with $\alpha\in[0,1]$. 

To produce SEM representation, the encoder's output $e$
is embedded into $L$ vectors $z_i\in\mathbb{R}^V$. 
A temperature parameter $\tau$ scales $z_i$, and then a softmax re-normalizes each vector $z_i$ to produce $\bar z_i$. Finally, the normalized vectors $\bar z_i$ are concatenated to produce the vector $\hat z$ of length $L \cdot V$. We illustrate SEM in Figure~\ref{fig:sem-depict}. Formally, the re-normalization is as follows:
\begin{equation}
    \label{eqn-boltzmann}
    \begin{split}
    \bar{z}_i := \sigma_\tau(z_i), \quad
    \sigma_\tau(z_{i})_j = \dfrac{e^{z_{ij}/\tau}}{\sum_{k=1}^V e^{ z_{ik}/\tau}}, \quad 
    \hat z := \text{Concat}(\bar{z}_{1}, \ldots,\bar{z}_{L}),
    \quad
    \forall i\in[L], \forall j\in[V].
    \end{split}
\end{equation}

\subsection{Inductive bias towards sparsity during pre-training}
\label{sec-pretrain}

In SEM, $L$ controls the numbers of simplices and $V$ controls the dimensionality of each simplex. As such, the higher $V$ is, the sparser the representation can be. During pre-training, the constraint induced by embedding the representation into a simplex biases each vector towards sparse vectors by creating a zero-sum competition between the components of the vector. In order for a component to increase by $\alpha$, then the other elements must decrease by $\alpha$, and all elements are bounded by $0$. For networks to learn useful features and minimize their objective, they must prioritize some components at the expense of others. The strength of this bias is controlled via the pretraining temperature $\tau_p$ of the softmax, and the size of the vectors $V$ as it was noted in the context of attention~\citep{vaswani_attention_2017,wang2021vanishing}. For SSL methods with a target network, the temperature for the target network can be different to the online network's as no gradient is back-propagated through it.

To visualize the effect of the temperature on SEM after pre-training, we interpret each simplex as a probability mass function $p(\bar z_{ij})$ where, for all $i\in[L]$, $\sum_{j=1}^V p(\bar z_{ij}) = 1$ and $p(\bar z_{ij}) \geq 0~\forall j$. The entropy of a simplex $\bar z_i$, defined as $H(\bar z_i):=-\sum_{j=1}^V p(\bar z_{ij})\log p(\bar z_{ij})$, informs whether the simplex is a sparse or a dense vector. That is, if $H(\bar z_{i}^{(x)})=0$ then the vector is one-hot. On the other hand, if $H(\bar z_{i}^{(x)})=\ln(V)$ then the vector is dense and uniform. While the temperature $\tau_p$ is merely a scaling of the logits, it has an important control over the learned representation's entropy and resulting SEM sparsity. 
We demonstrate this by learning a representation on CIFAR-100  using BYOL, and analyze the entropies of the resulting simplices. In Figure~\ref{fig:hist_ent}, we plot the histogram of the entropies $H(\bar z_{i})$, for a given $\tau_p$, of each simplex for each sample in the training set of CIFAR-100. We observe that even after pre-training, small temperatures ($\tau_p=0.01$) yields representations that are close to one-hot vectors while high temperatures yields vectors that are close to uniform vectors. 

By pre-training using a softmax, SEMs create representations that are conditioned to fit onto simplices. In pre-training, we select $\tau_p$ for optimal inductive bias: $\tau_p$ too small yields vanishing gradients~\citep{wang2021vanishing} and $\tau_p$ too large yields a bias that is too weak. We may select a different optimal $\tau_d$ for downstream performance as discussed formally in the next subsection.

\subsection{SEM improvement on the generalization of the downstream classifier}
\label{sec-theory}

In this subsection,
we theoretically demonstrate the benefit of training a downstream classifier with SEM normalized input compared to a baseline classifier with unnormalized input. 
We show that: (1)  there is a trade-off between the training loss and the generalization gap, which is controlled by the value of $\tau_d$ (denoted $\tau\coloneqq \tau_d$ in this subsection), (2) SEM can improve the base model performance when we attain good balance in this trade-off, and (3) the improvement due to SEM is expected to increase or stay constant as $L$ and $V$ increase. In the remainder of this subsection, we introduce the notation and assumptions needed to understand and derive the result, then present our theoretical claim and discuss
its implications.

\textbf{Notation.}~ 
We use a training dataset $S=(z^{(i)},y^{(i)})_{i=1}^n$ of $n$ samples for supervised training of a classifier, using the representation $z$ extracted from the pre-trained model\footnote{In this subsection, we refer to the extracted representation as $z$, the embedder's output} and the corresponding label $y\in\Ycal$ where $\Ycal$ is the space of possible labels. Assume that $z \in \Zcal=[-1,+1]^{L\times V}$, which means that $z$ is a matrix with $L$ rows and $V$ columns. We denote the element of $z$ at row $i$ and column $j$ as $z_{ij}$.
Let $g$ represent the downstream classifier. 
We refer to the baseline downstream model with unnormalized input as $f_\base$, and $f_{\base}(z) = g(z)$. 
The corresponding downstream model trained with the SEM normalization is $f_{\SD(\tau)}(z)=(g\circ\sigma_{\tau})(z)$, where $\sigma_\tau$ is applied element-wise along each row of $z$ such that
 $\sigma_{\tau}(z_{ij})= \frac{e^{z_{ij}/\tau}}{\sum_{t=1}^V e^{z_{it}/\tau}}$ for $j=1,\dots,V$.
Moreover, we define $f^S_\base$ and $f^S_{\SD(\tau)}$ the base and the SEM normalized models obtained by fitting the dataset $S$.
Finally, let $\Hcal$ be the union of the hypothesis spaces of $f_{\SD(\tau)}$ and $f_{\base}$.

To compare the quality of the base model and the model with SEM normalization, we analyze the generalization gap $
\EE_{z,y}[l(f_S(z) ,y)] - \frac{1}{n} \sum_{i=1}^n l(f_S(z^{(i)}),y^{(i)}) $ for each $f_S \in \{f_{\SD(\tau)}^S, f_\base^S\}$,
where $l:\RR \times\Ycal\rightarrow \RR_{\ge 0}$ is the per-sample loss.

The key insight that we exploit for the theorem is that the softmax operation $\sigma_\tau$ controls the expressivity of the input's representation to $g$ via the temperature $\tau$. We denote $\varphi_{f_\base}$ as an upper bound on the expressivity of $z_i$ for the baseline model $f_\base$, and $\varphi_{f_{\SD(\tau)}}$ as the upper bound on the expressivity of $\sigma_\tau(z_i)$ for the model with SEM normalization $f_{\SD(\tau)}$. The formal definition of $\varphi_{f_\base}$ and $\varphi_{f_{\SD(\tau)}}$ requires proof devices that will hinder the readability of this section, so we refer the reader to Appendix~\ref{sec-proof} for a detailed definition. Let $\varphi_{f} \in \{\varphi_{f_\base}, \varphi_{f_{\SD(\tau)}}\}$. Intuitively, $\varphi_{f_S}$ measures the largest possible distance that two embeddings can have such that the largest component remains the same for both embeddings.
We note that this measure depends only on $V$ for $f_\base$, and on both $V$ and $\tau$ for $f_{\SD(\tau)}$. We use $\varphi_{f_S}(V, \tau)$ to denote the measure given by either model and note that $\tau$ has no effect for $f_\base$.

\textbf{Assumptions.}~
We assume that the per-sample loss is bounded such that $l(f(z),y)\le B$ for all $f \in \Hcal$ and for all $(z,y)\in \Zcal \times \Ycal$. For example, $B=1$ for the 0-1 loss. 
Next, let $l_y$ be the per-sample loss given $y$. We assume that $l_y \circ g$ are uniformly Lipschitz functions for all $y \in\Ycal$ and $g \in \Gcal_{S}$, where $\Gcal_S$ is the set of classifiers $g$ returned by the training algorithm using the dataset $S$. Let $R$ be such a uniform Lipschitz constant.
This means that $|(l_{y}\circ g)^{}(\sigma_f(z))-(l_{y}\circ g)(\sigma_f(z'))|\le R\|\sigma_f(z)-\sigma_f(z')\|_F$, where 
$l_y(g\circ\sigma_f(z))=l(g\circ\sigma_f(z),y)$, and $\sigma_f=\sigma_\tau$ when $f=f_{\SD(\tau)}$ and $\sigma_f$ is identity when $f=f_\base$.
Finally, we assume that there exists $\Delta>0$ such that for all representations $z$ of the underlying distribution we have that for any $i \in [L]$, if $k=\argmax_{j\in[V]} z_{ij}$, then $z_{ik} \ge z_{ij}+\Delta$ for any $j \neq k$.
Since $\Delta$ can be arbitrarily small (e.g. as small as machine precision),
this assumption typically holds in practice.
We are now ready to state our theoretical claim.

Theorem~\ref{thm:1} illuminates the advantage of SEM and the effect of the hyper-parameter $\tau$ on the performance of the downstream classifier. We present the proof in Appendix~\ref{sec-proof} and we present empirical evidence of the theorem's prediction in Figure~\ref{fig:fsem_compare}.
\begin{restatable}{theorem}{thma} \label{thm:1}
 Let $V\ge 2$. For any $1\geq\delta>0$,  with  probability at least $1-\delta$, the following holds for any $f_{S} \in \{f_{\SD(\tau)}^S, f_\base^S\}$: \vspace{-10pt}
\begin{align} \label{eq:4}
\EE_{z,y}[l(f_{S}(z) ,y)]   \le\frac{1}{n} \sum_{i=1}^n l(f_{S}(z^{(i)}),y^{(i)}) 
 \nonumber + R\sqrt{L \,\varphi_{f_S}(V, \tau)} + c \sqrt{\frac{\ln(2/\delta)}{n}}, \vspace{-10pt}
\end{align}
where $c>0$ is a constant in $(n, f, \Hcal, \delta, \tau, S)$. Moreover, 
\begin{align*}
\varphi_{f_{\SD(\tau)}^{S}} \rightarrow 0 \ \ \text{ as } \, \tau \rightarrow 0 \quad \text{and}\quad \varphi_{f_{\SD(\tau)}^{S}}-\varphi_{f_{\base}^{S}}\le  \frac{3}{4}(1-V)<0 \ \ \ \forall \tau >0. 
\end{align*}
\end{restatable}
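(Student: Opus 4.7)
The theorem combines three claims, which I would address in order.

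\emph{Part I: the concentration bound.} I would start from a standard uniform Rademacher-complexity bound obtained by symmetrization followed by McDiarmid's inequality applied to the bounded loss ($l\le B$): with probability at least $1-\delta$, uniformly over $f\in\Hcal$,
\begin{equation*}
\E_{z,y}[l(f(z),y)] - \tfrac{1}{n}\sum_{i=1}^n l(f(z^{(i)}),y^{(i)}) \le 2\,\mathcal{R}_n(l\circ\Hcal) + c\sqrt{\log(2/\delta)/n}.
\end{equation*}
Since $l_y\circ g$ is $R$-Lipschitz for every admissible $(y,g)$, Talagrand's contraction principle gives $\mathcal{R}_n(l\circ\Hcal)\le R\,\mathcal{R}_n(\mathcal{F})$, where $\mathcal{F}$ is the corresponding representation family (the identity for $f_\base$ and simplex-wise $\sigma_\tau$ for $f_{\SD(\tau)}$). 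The core step is to bound $\mathcal{R}_n(\mathcal{F})\le\sqrt{L\varphi(f_S)/n}$. I would partition $\Zcal\subset[-1,+1]^{L\times V}$ according to the argmax tuple $(i_1,\ldots,i_L)\in[V]^L$ of the $L$ simplices; within any piece the squared $\ell_2$-diameter of the representation decomposes across the $L$ simplices and is bounded by $L\varphi(f_S)/n$ using the per-simplex sup appearing in the definition of $\varphi$. A standard ``diameter over $\sqrt{n}$'' Rademacher estimate inside each piece, combined with the Lipschitz contraction, then produces the $R\sqrt{L\varphi(f_S)/n}$ term.

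\emph{Part II: $\varphi(f_{\SD(\tau)}^S)\to 0$ as $\tau\to 0$.} Using the $\Delta$-gap (so that $q\in Q_i$ satisfies $q_i\ge q_j+\Delta$ for all $j\ne i$), a direct softmax computation gives $\sigma_\tau(q)_i\ge (1+(V-1)e^{-\Delta/\tau})^{-1}\to 1$ and $\sigma_\tau(q)_j\le e^{-\Delta/\tau}\to 0$ for $j\ne i$ as $\tau\downarrow 0$, uniformly in $q$. Hence $\sup_{q,q'\in Q_i}\|\sigma_\tau(q)-\sigma_\tau(q')\|^2\to 0$ for every $i$, and so $\varphi(f_{\SD(\tau)}^S)\to 0$.

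\emph{Part III: the comparison inequality.} Two elementary estimates suffice. Since each $\sigma_\tau(q)$ is a probability vector, $\|\sigma_\tau(q)-\sigma_\tau(q')\|^2\le 2$ (the squared diameter of the probability simplex), giving $\varphi(f_{\SD(\tau)}^S)\le 2n$. For a lower bound on $\varphi(f_\base^S)$, pick $q=(1,1-\Delta,\ldots,1-\Delta)$ and $q'=(-1+\Delta,-1,\ldots,-1)$, both in $Q_1$ with gap exactly $\Delta$: then $\|q-q'\|^2 = V(2-\Delta)^2$, hence $\varphi(f_\base^S)\ge nV(2-\Delta)^2$. Subtracting,
\begin{equation*}
\varphi(f_{\SD(\tau)}^S) - \varphi(f_\base^S) \le 2n - nV(2-\Delta)^2 \le \tfrac{3n}{4}(1-V),
\end{equation*}
where the final inequality reduces to $V(2-\Delta)^2\ge (3V+5)/4$, which is easily verified for every $V\ge 2$ once $\Delta$ is sufficiently small (consistent with the ``$\Delta$ arbitrarily small'' regime emphasized in the hypotheses).

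The main obstacle is Part I: translating the Lipschitz classifier's Rademacher complexity into the piece-wise diameter $\varphi$ requires careful bookkeeping across the $V^L$ argmax pieces, how the $\Delta$-gapped data distributes among them, and how this interacts with the Rademacher supremum. The cleanest route is a within-piece diameter argument followed by a union or chaining across pieces; the other two parts are short computations once Part I is in place.
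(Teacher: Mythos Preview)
Your Parts II and III are essentially what the paper does (your $2n$ upper bound on $\varphi(f_{\SD(\tau)}^{S})$ is cruder than the paper's Lemma~4 but perfectly adequate, and your lower bound on $\varphi(f_{\base}^{S})$ is exactly the paper's construction). The substantive issue is Part I.

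\textbf{The Rademacher route does not produce the $\varphi$ term.} After your proposed contraction step you write $\mathcal{R}_n(l\circ\Hcal)\le R\,\mathcal{R}_n(\mathcal{F})$ with $\mathcal{F}$ the ``representation family.'' But Talagrand's contraction applies a \emph{fixed} Lipschitz function to a class, whereas here the Lipschitz map $l_y\circ g$ ranges over all $(y,g)\in\Ycal\times\Gcal_S$; you cannot peel it off as a single contraction. Worse, the representation map is a \emph{single} deterministic function (the identity, or $\sigma_\tau$), so $\mathcal{R}_n(\mathcal{F})=0$ for that singleton class---your bound would collapse to the McDiarmid term alone and the $\varphi$ quantity would never appear. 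Conversely, if you keep $g$ inside the class, you have no complexity control on $\Gcal_S$ beyond the Lipschitz constant $R$, and a Lipschitz constant alone does not bound a Rademacher average. The ``diameter over $\sqrt{n}$'' heuristic you invoke is not a Rademacher estimate for a rich class; it is the right object, but in the wrong framework.

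\textbf{What the paper actually does.} The paper abandons Rademacher complexity entirely. It fixes the deterministic partition $\{\Ccal_k\}$ of $\Zcal\times\Ycal$ by label and by the $L$-tuple of argmaxes, and decomposes
\[
\E[l(f,r)]-\tfrac{1}{n}\sum_i l(f,r_i)
\;=\;\underbrace{\sum_k \alpha_k(f)\Bigl(\Pr(r\in\Ccal_k)-\tfrac{|\Ical_k|}{n}\Bigr)}_{\text{(A)}}
\;+\;\underbrace{\tfrac{1}{n}\sum_k|\Ical_k|\Bigl(\alpha_k(f)-\tfrac{1}{|\Ical_k|}\sum_{i\in\Ical_k}l(f,r_i)\Bigr)}_{\text{(B)}}.
\]
Term (A) is bounded by $B\sum_k|\Pr(r\in\Ccal_k)-|\Ical_k|/n|$, which is $h$-independent and controlled by the Bretagnolle--Huber--Carol inequality; this is the source of $c\sqrt{\ln(2/\delta)/n}$. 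Term (B) is bounded \emph{deterministically}: within each cell $\Ccal_k$ the label is fixed and the argmax pattern is fixed, so the Lipschitz assumption gives
\[
|\,l(f,r)-l(f,r')\,|\le R\,\|\text{rep}(z)-\text{rep}(z')\|_F\le R\sqrt{L\varphi(f)/n},
\]
exactly the within-piece diameter you identified. No uniform-convergence machinery is needed for (B); the only stochastic step is (A), and it does not see $f$ at all. This is why the bound depends on $\Gcal_S$ only through $R$.

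So your diameter intuition is correct, but it belongs in a partition/concentration argument, not inside a Rademacher bound. Replace the symmetrization--contraction opening with the decomposition above and the rest of your outline goes through.
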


The first statement of \Cref{thm:1} shows that the expected loss is bounded by the three terms: the training loss $\frac{1}{n} \sum_{i=1}^n l(f_{S}(z^{(i)}),y^{(i)}) $, the second term $R\sqrt{L \varphi_{f_S}}$, and the third term $c \sqrt{\frac{\ln(2/\delta)}{n}}$. Since $c$ is a constant in $(n, f, \Hcal, \delta, \tau, S)$, 
the third term goes to zero as $n\rightarrow \infty$ and is the same with and without SEM. Thus, for the purpose of assessing the impact of SEM, we can focus on the second term, where a difference arises.
\Cref{thm:1} shows that  $R\sqrt{L \varphi_{f_S}}$ goes to zero with SEM; i.e.,  $\varphi(f_{\SD(\tau)}^{S}) \rightarrow 0 \text{ as } \tau \rightarrow 0$. Also, for any $\tau>0$, the second term with SEM is strictly smaller than that without SEM as $\varphi_{f_{\SD(\tau)}^{S}}-\varphi_{f_{\base}^{S}}\le  \frac{3}{4}(1-V)<0$ and demonstrates that the improvement due to SEM is expected to asymptotically increase as $V$ increases. Moreover, $L$ is a multiplicative constant of $\varphi$ which shows that, as $L$ increases, the improvement due to SEM is also expected to be higher.
Overall, \Cref{thm:1} shows the benefit of SEM as well as the trade-off with $\tau$. When $\tau \rightarrow 0$, the second term goes to zero, but the training loss (the first term) can increase due to underfitting resulting from the reduction in representation expressivity.
Thus, $\tau$ should be chosen to optimally balance this trade-off.

\section{Empirical analysis}
\label{experiments}

We empirically study the effect of SEM on the representation of SSL methods
and demonstrate that SEM improves the test set accuracy on \textsc{CIFAR-100}~\citep{krizhevsky09cifar}. We compare SEM with other methods for inducing sparse representations during pretraining and demonstrate that SEM lead to better downstream accuracy. 
On \textsc{ImageNet}~\citep{deng2009imagenet}, we study the effect of SEM on robustness, semi-supervised learning and transfer learning datasets, demonstrating consistent improvement attributed to SEM. Finally, we present evidences that features produced by SEMs are more naturally aligned with the semantic categories of the data. The code for reproducing the results is available at: \href{https://github.com/lavoiems/simplicial-embeddings/}{https://github.com/lavoiems/simplicial-embeddings/}. 

\begin{table}[b]
    \centering
    \setlength\tabcolsep{2pt}
    \small
    \caption{Linear probe top-1 accuracy on \textsc{CIFAR-100} trained for \emph{1000 epochs} with a ResNet-18 encoder. We compare the \emph{test accuracy} of several SSL models with and without SEM. 
    \textbf{Boldface} indicates highest accuracy. Green rows indicate a SSL method + SEM. The mean and the standard deviation are calculated over 5 seeds}
    \label{tab:baseline}
    \begin{tabular}{lccccccc}
         & \textsc{SimCLR} & \textsc{MoCo} & \textsc{BYOL} & \textsc{Barlow-Twins} & \textsc{SwAV} & \textsc{DINO} & \textsc{VicReg} \\
         \hline
         Baseline                      & $65.8\pm 0.3$ & $69.3\pm 0.3$ & $70.7\pm 0.2$ & $70.7\pm 0.3$ & $64.6\pm 0.3$ & $66.8\pm 0.3$ & $68.5\pm 0.2$ \\
         \rowcolor{lightgreen}With \textsc{SEM} & $\bm{69.5\pm 0.2}$ & $\bm{71.0\pm 0.3}$ & $\bm{73.9\pm 0.2}$ & $\bm{73.0\pm 0.2}$ & $\bm{67.7\pm 0.2}$ & $\bm{69.2\pm 0.3}$ & $\bm{71.4\pm 0.4}$ \\
         \hline
    \end{tabular}
\end{table}
\textbf{Training setup.} For all experiments, we build off the implementation of the baseline models from the Solo-Learn library~\citep{turrisi2021sololearn}.
We probe the encoder's output for the baseline methods, as typically done in the literature. For models with \textsc{SEM}, we probe the \textsc{SEM} normalized representation (i.e. $\hat{z}$). In our experiments, the embedder is a linear layer followed by BatchNorm~\citep{ioffe2015bn}. 
Unless mentioned otherwise, we use $L=5000$ and $V=13$ for the SEM representation. We do not perform any search for the non-SEM hyper-parameters. The SEM hyper-parameters are selected by using a validation set of 10\% of the training set of \textsc{CIFAR-100} and 10 samples per class on the in distribution dataset for \textsc{ImageNet}. The test accuracy is obtained by retraining the model with all of the training data using the parameters found with the validation set. We pre-train the SSL models for 200 epochs on \textsc{ImageNet} and 1000 epochs on \textsc{CIFAR-100}. 

\subsection{SEM improves on downstream classification}
\label{downstream-class}

\textbf{Baseline comparison.} We evaluate the effect of adding SEMs in seven modern SSL approaches. We take standard SimCLR~\citep{chen2020simple}, MoCo-v2~\citep{he_momentum_2020},  BYOL~\citep{grill2020byol} Barlow-Twins~\citep{zbontar2021barlow}, SwAV~\citep{caron2020swav}, DINO~\citep{caron_emerging_2021} and VicReg~\citep{bardes2022vicreg} models and implement SEM after the encoder. We compare our approach on CIFAR-100 with a ResNet-18 in Table~\ref{tab:baseline}. For every SSL methods, using SEMs improves the baseline methods by $2\%$ to $4\%$ demonstrating that SEM is a general approach that improves in-distribution generalization for SSL methods.

\begin{wrapfigure}[23]{tr}{5cm}
\vspace{-0.75cm}
  \centering
    \includegraphics[width=\linewidth]{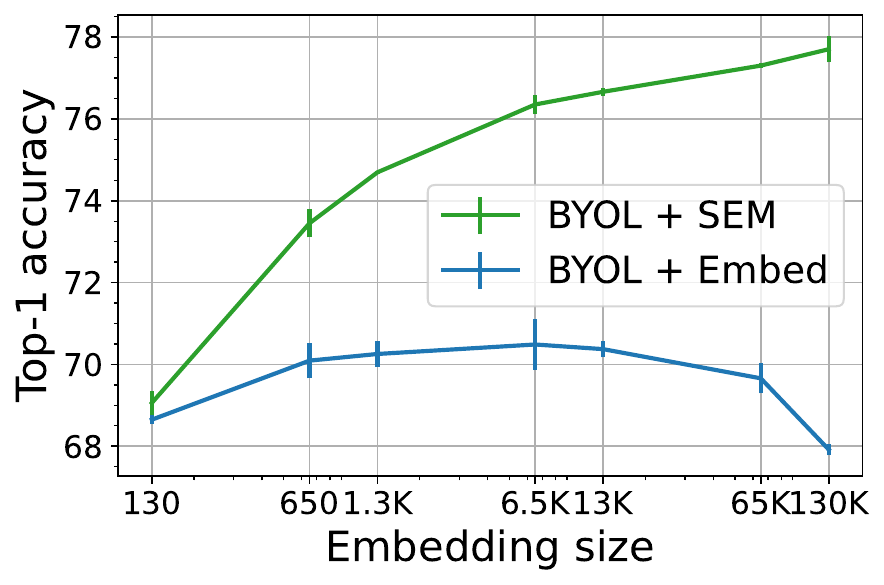}
    \captionof{figure}{Effect of the Softmax when scaling up $L$ on the linear probe accuracy. Using a RN-50.}
    \label{fig:emb_scale}
\centering
\captionof{table}{Comparing SEM with \emph{hard} discretization using Gubel Straight-Through and Vector Quantization (V.Q.). RN-18 base on CIFAR-100. }
\label{tab:hard_disc}
\setlength\tabcolsep{1.5pt}
\begin{tabular}{lc}
& Accuracy \\
\hline
\textsc{BYOL} & $70.7$ \\
\hline
\textsc{BYOL}+Gumbel S.-T.  & $48.6$  \\
\textsc{BYOL}+V.Q.   & $65.6$ \\
\rowcolor{lightgreen}\textsc{BYOL}+\textsc{SEM}${(\tau_\text{d}=0)}$  & $73.2$ \\
\rowcolor{lightgreen}\textsc{BYOL}+\textsc{SEM}${(\tau_\text{d}=0.1)}$  & $\bm{73.9}$ \\
\hline
\end{tabular}
\end{wrapfigure}

\textbf{Increasing the representation's size of SEM increases the performance.} We find that increasing $L$ (the number of simplices of SEM) beyond the over-complete regime increases the downstream accuracy. This increased performance is not observed when we abstain from using the softmax normalization of SEM. In Figure~\ref{fig:emb_scale}, using a ResNet-50 encoder, we compare BYOL + SEM, with an identical model without the Softmax normalization which we call BYOL + Embed. As this is a control experiment, the extracted representation of BYOL + Embed is the embedder's output $z_\theta$. We fix $V=13$ and scale $L \in [10, 10000]$ to get a range of representation sizes. The mean and standard deviation over $5$ seeds is plotted. This experiment demonstrates that SEM offers a simple way to scale up the capacity of the model and that the softmax normalization is necessary to attein increase performance.

\textbf{Comparison of SEM with \textit{hard} discretization approaches.} Several other methods can be used to induce a sparse and over-complete representation during pre-training and downstream classification. For example, we may sample $L$ discrete one-hot codes of $V$ dimensions using Gumbel Softmax~\citep{jang2017gumbel} as done in~\citet{dess2021comp}. We can also use Vector Quantization (VQ)~\citep{oord_neural_2018} and consider $L$ latent embedding spaces with $V$ embedding vectors each, wherein the vectors are in $\mathbb{R}^d$. In contrast to SEM, it is not possible to propagate the gradient through the bottleneck trivially and VQ uses straight-through estimation in the embedding space to back-propagate the gradient to the encoder.
Here, we observe that these alternative approaches exhibit a considerable decrease in performance in comparison to the baseline as demonstrated in Table~\ref{tab:hard_disc}. In this table, we reproduce the same setup as SEM but we replace the Softmax with hard discretization baselines methods. For discretization with Gumbel Straight-Through estimation, we use the same setup as SEM with $L=5000$ and $V=13$, that is $5000$ one-hot vectors of $13$ dimensions and $\tau=2$\footnote{A hyper-parameter search was performed to select the best performing hyper-parameter.}. For VQ, we found that $L=512$ and $V=128$ led to the best performance. That is, we have $512$ latent embedding spaces, each with $128$ possible embedding vectors that are in $\mathbb{R}^{32}$. 

We note that while we have not found hard-discretization to be successful during pre-training, we may hard-discretize a SEM representation for downstream task. In Table~\ref{tab:hard_disc}, we also present SEM with $\tau_\text{DS}=0$, which correspond to using the discretized representation for downstream classification. We obtain the discrete representation by taking the argmax for each simplex. This result demonstrating that SEM with pre-training can be used to learn meaningful discrete codes for downstream applications and yields better performance than the baselines, implying that pre-training with SEM could be be used in applications that require discretization.

\textbf{Memory and computational efficiency of SEM.} 
SEM's performance improvements come at a cost of increased memory allocation (VRAM) due to additional parameters needed to perform the matrix multiplication, and slightly more computation (FLOPs/sample). For very large over-complete representation the increased memory requirement can impede practical application. We propose a more efficient version of SEM by sparsifying the matrix multiplication of the embedder and of the projector and detail this procedure in Appendix \ref{app:sec-memory_sem}. As shown in Table \ref{tab:compute-efficient-sem}, SEM with sparse matrix multiplication use only slightly more memory and compute but outperforms the BYOL baseline on \textsc{CIFAR-100} though underperforming the regular SEM. We also note that SEM's memory cost becomes relatively minor as we scale up the encoder. As well, the computational cost of SEM is small compared to the total cost of pre-training and achieves higher accuracy using fewer FLOPs compared to scaling the encoder as shown in Figure \ref{fig:in-scaling}.

\subsection{Analyzing the parameters of SEM}
\label{sec-temp}

We present two figures in this section to better understand the effect of the parameters of SEM on the downstream accuracy. In Figure~\ref{fig:tau_cifar}, we evaluate the effect of changing $\tau_p$ and $\tau_d$ on the downstream accuracy. In Figure~\ref{fig:fsem_compare}, we evaluate the effect of $L$ and $V$ on the downstream accuracy and also contrast $f_\base$ and $f_\text{SEM}(\tau=1)$, allowing us to confirm two predictions made in Section~\ref{sec-theory}: the expected generalization improvement from SEM increases as we increase $L$ and as we increase $V$. Now, we discuss the effect of each of SEM's parameter on the resulting downstream classification.

\textbf{Increasing $V$ yields a steep performance increase for small $V$ but quickly plateau.} In Figure~\ref{fig:v_exp}, we observe a steep increase of the accuracy for $V<13$ followed by a plateau for $V>13$. In Figure~\ref{fig:pretrain_tau}, we observe that the optimal accuracy obtained for $V=1024$ and $L=64$ is similar to the one obtained for $L=50$ (Embedding size=650) in Figure~\ref{fig:emb_scale}.

\textbf{Increasing $L$ yields  monotonical improvement for downstream classification.} In the regime that we can test it, increasing $L$ lead to consistent improvement on the downstream accuracy as observed in Figure~\ref{fig:emb_scale} and Figure~\ref{fig:l_exp}. Using SEM in pre-training only is not enough and using it in the downstream classifier is necessary for the improved performance as demonstrated in Figure~\ref{fig:l_exp}.

\textbf{The optimal $\tau_p$ depends on $V$}. As previously noted in the context of Attention~\citep{vaswani_attention_2017,wang_generalizing_2021}, the optimal attention's temperature is proportional to attention's vector size. We also observe this in SEM. As presented in Figure~\ref{fig:pretrain_tau}, the optimal $\tau_p$ for larger $V$ is higher.

\textbf{Models with larger $L$ are more robust to smaller $\tau_d$.} In Figure~\ref{fig:tau_cifar}, we observe that SSL models are more robust to smaller $\tau_d$ as $L$ increase. We speculate that the information can be scattered across the simplices for large $L$, allowing to reduce the expressivity of each vector with minimal impact on the downstream accuracy.

\begin{minipage}{\linewidth}
\begin{minipage}{0.49\textwidth}
\begin{figure}[H]
\begin{subfigure}{0.49\textwidth}
     \centering
    \includegraphics[width=\textwidth]{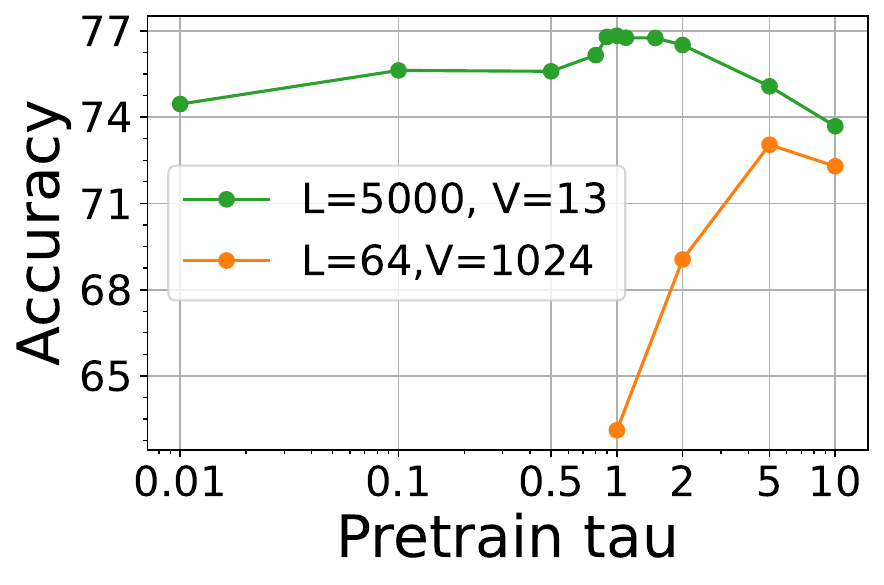}
     \caption{}
     \label{fig:pretrain_tau}
\end{subfigure}
\hfill
\begin{subfigure}{0.49\textwidth}
    \includegraphics[width=\textwidth]{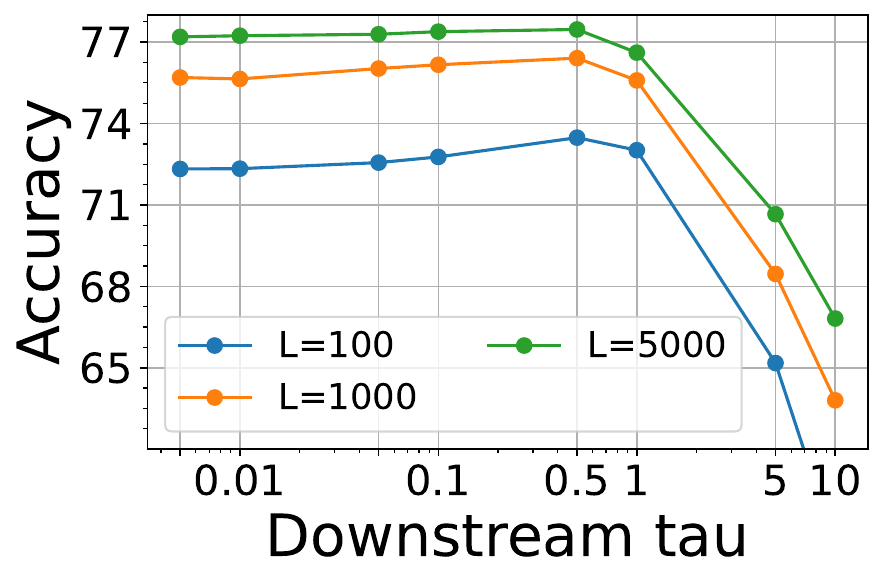}
    \caption{}
    \label{fig:ds-L}
\end{subfigure}
\caption{Effect of $\tau_p$ and $\tau_d$ on a RN-50.}
\label{fig:tau_cifar}
\end{figure}
\end{minipage}
\hfill
\begin{minipage}{0.49\textwidth}
\begin{figure}[H]
\begin{subfigure}{0.49\textwidth}
     \centering
     \includegraphics[width=\textwidth]{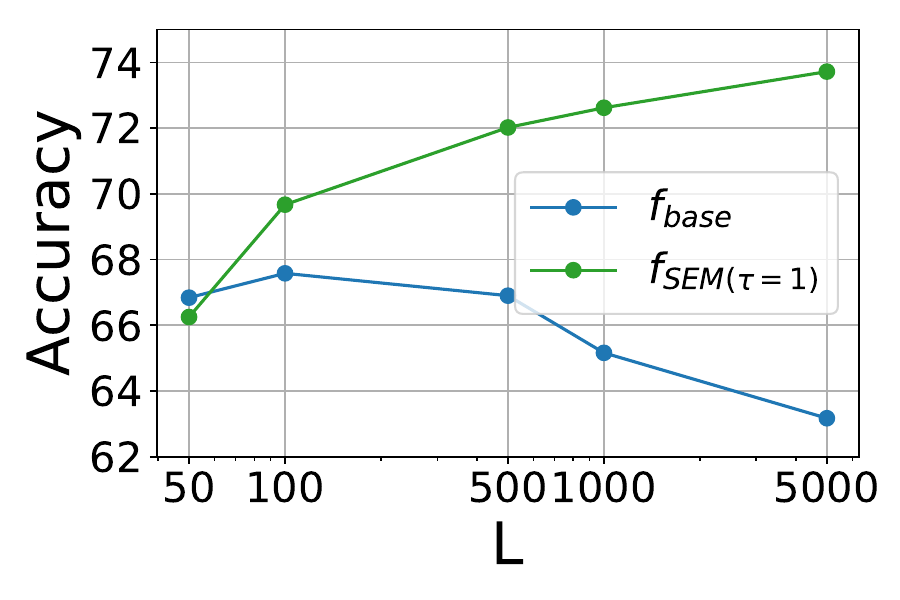}
     \caption{}
     \label{fig:l_exp}
\end{subfigure}
\hfill
\begin{subfigure}{0.49\textwidth}
    \includegraphics[width=\textwidth]{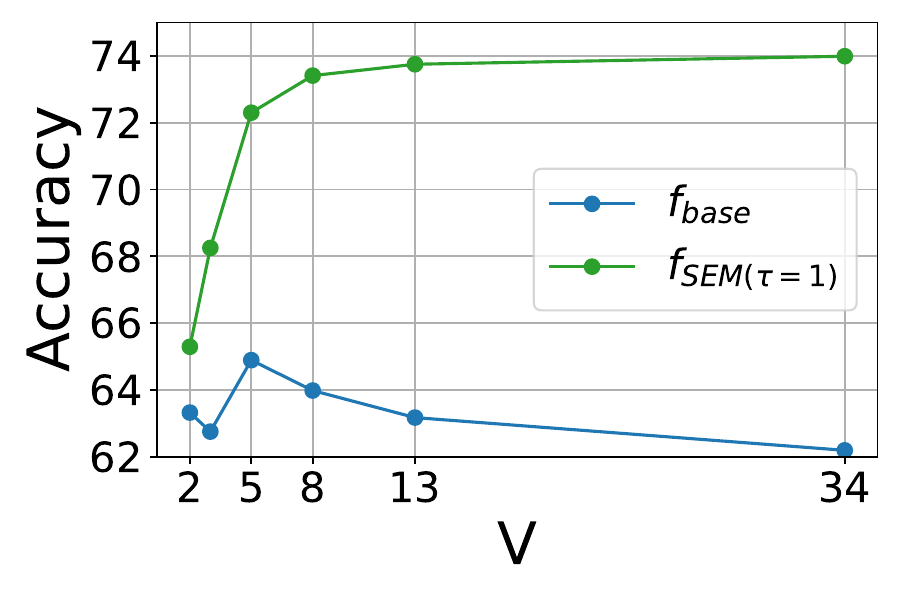}
    \caption{}
    \label{fig:v_exp}
\end{subfigure}
\caption{Comparing $f_\text{SEM}$ and $f_\text{base}$ on a RN-18.}
\label{fig:fsem_compare}
\end{figure}
\end{minipage}
\end{minipage}

\subsection{SEM improvement on large-scale datasets with ImageNet}

Figure~\ref{fig:in-scaling} in the introduction demonstrates that using SEM leads to better in distribution generalization for \textsc{ImageNet} and is a more efficient method of scaling up the model as compared to scaling up the width of the ResNet-50 encoder. Here, we demonstrate that \textsc{SEM} 
generally improves the accuracy on several robustness test sets, a semi-supervised learning benchmark and transfer learning datasets. We use \textsc{BYOL}+\textsc{SEM} with an embedding size of 105 000 features ($L=5000$ and $V=21$) for these experiments. The embedding is pre-trained for 200 epochs using the BYOL SSL procedure.

\textbf{Robustness to out-of-distribution test sets.}
We perform a comparative study using several test sets: ({\bf IN}) the in-distribution test set provided in \textsc{ImageNet}; ({\bf IN-C}) \textsc{ImageNet-C}, which exhibits a set of common image corruptions~\citep{hendrycks_benchmarking_2018}; ({\bf IN-R}) \textsc{ImageNet-R}~\citep{hendrycks_many_2021} which consists of different renderings for several \textsc{ImageNet} classes; and ({\bf IN-V2}) \textsc{ImageNet-V2}~\citep{recht_imagenet_2019}, a distinct test set for \textsc{ImageNet} collected using the same process; ({\bf IN-A}) \textsc{Imagenet-A}~\citep{chen_adversarial_2020} contains a set of samples that are miclassifier by a \textsc{ImageNet} ResNet-50 classifier. We use the methodology and software proposed in~\citet{djolonga2020robustness,djolonga_robustness_2021} to perform our experiments. We observe that BYOL + SEM outperforms BYOL on every robustness datasets probed, demonstrating that SEM also improves generalization to out-of-distribution test sets.

\textbf{Transfer learning.}
We probe the effectiveness of SEM in BYOL and MoCo when transferring representations trained on \textsc{ImageNet} to other classification tasks. We follow the linear evaluation and fine-tuning methodologies described in previous works \citep{grill2020byol,lee_compressive_2021}, which entails training a linear classifier with logistic regression using sklearn~\citep{scikit-learn} on the embeddings of the samples and fine-tuning the encoder respectively. To avoid out-of-memory issues that may occur in the linear probe experiment with the sklearn solver when the number of features, we discretize our features and use sparse matrix to fit the logistic regression. This is equivalent to forcing $\tau_d=0$ for all the experiments. For the fine-tuning experiments, we fix $\tau_d=1$ since the evaluation method allows for mini-batch gradient descent. We perform our transfer learning experiments on the following datasets: \textsc{Food}~\citep{food101}, \textsc{CIFAR-10} (\textsc{C-10})~\citep{krizhevsky09cifar}, \textsc{CIFAR-100} (\textsc{C-100})~\citep{krizhevsky09cifar}, \textsc{SUN}~\citep{xiao2010sun}, \textsc{DTD}~\citep{cimpoi2014describing} and \textsc{Flower}~\citep{flowers}. 

\begin{table}
\begin{minipage}[c]{0.43\textwidth}
\centering
\small
\setlength\tabcolsep{1pt}
\captionof{table}{Robustness via linear probe top-1 test accuracies on \textsc{ImageNet} variant datasets, using representations pre-trained for \emph{200 epochs}. * Taken from~\citep{chen2020simsiam}}
    \label{tab:robust_imagenet}
\begin{tabular}{lccccc}
                     & IN & IN-V2 & IN-R & IN-C  & IN-A
                     \\
                     \hline
        \textsc{BYOL}* 
                       & $70.6$ & - & - & - & -
                       \\ 
        \textsc{BYOL}         
                       & $71.9$ & $59.2$ & $18.8$ & $39.5$ & $1.65$
                       \\
        \rowcolor{lightgreen}\textsc{BYOL}+\textsc{SEM}  
                       & $\bm{74.1}$ & $\bm{61.2}$ & $\bm{22.1}$ & $\bm{43.4}$ & $2.53$
                       \\
        \hline

    \end{tabular}
\end{minipage}
\hfill
\begin{minipage}[c]{0.55\textwidth}
    \small
    \setlength\tabcolsep{1pt}
    \captionof{table}{Top-1 transfer learning accuracy from \textsc{ImageNet} pre-trained representation.}
    \label{tab:iid_repr}
    \begin{tabular}{lcccccc}
        & \textsc{Food101} & \textsc{C10} & \textsc{C100} & \textsc{Sun} & \textsc{DTD} & \textsc{Flower} \\
        \hline
        \textit{Linear probe:} \\
        \textsc{BYOL}        & $74.2$ & $91.8$ & $74.9$ & $60.9$ & $\bm{72.2}$ & $88.9$ \\
        \rowcolor{lightgreen}\textsc{BYOL}+\textsc{SEM}  & $\bm{74.7}$ & $\bm{93.5}$ & $\bm{78.6}$ & $\bm{62.1}$ & $71.9$ & $\bm{91.5}$ \\
        \hline
        \textit{Fine-tuned:} \\
        \textsc{BYOL}        & $83.1$ & $97.2$ & $83.6$ & $59.1$ & $69.2$ & $85.4$ \\
        \rowcolor{lightgreen}\textsc{BYOL}+\textsc{SEM}  & $\bm{84.7}$ & $97.2$ & $\bm{85.6}$ & $\bm{63.3}$ & $\bm{71.3}$ & $\bm{91.7}$ \\
        \hline
    \end{tabular}
\end{minipage}
\vspace{-.4cm}
\end{table}

This task evaluates the generality of the encoder as it has to encode samples from various out-of-distribution domains with categories that it may not have seen during training. We present our results in \Cref{tab:iid_repr} and observe that \textsc{SEM} improves the transfer accuracy over the baseline for every datasets but \textsc{DTD} for the linear probe experiment. For \textsc{DTD}, we hypothesize that the drop in performance is due to the fact that we use a temperature that is too small. Since this is a texture dataset with higher frequency, it might be the case that we need more expressivity to correctly fit the data. We support the conjecture with the fine-tuning experiment where \textsc{BYOL} + \textsc{SEM} out-performs the baseline. 

\begin{wraptable}[8]{R}{0.4\textwidth}
    \centering
    \vspace{-0.4cm}
    \small
    \setlength\tabcolsep{3pt}
    \captionof{table}{Semi-supervised learning accuracy by fine-tuning on \textsc{ImageNet}.}
    \label{tab:semi-supervised}
    \begin{tabular}{lcc|cc}
    & \multicolumn{2}{c}{Top-1} &\multicolumn{2}{c}{Top-5}\\
      & 1\% & 10\% & 1\% & 10\% \\
    \hline
     \textsc{BYOL} & $51.6$ & $67.5$ & $78.0$ & $88.9$ \\
     \rowcolor{lightgreen}\textsc{BYOL}+\textsc{SEM} & $\bm{56.7}$ & $\bm{69.9}$ & $\bm{81.0}$ & $\bm{90.0}$ \\
     \hline
    \end{tabular}
\end{wraptable}
\textbf{Semi-supervised learning.} We evaluate the effect of using SEM when fine-tuning on a classification task with a small subset of \textsc{ImageNet}'s training set. We follow the semi-supervised learning procedure of~\citet{chen2020simple,grill2020byol} and use the same fixed splits of 1\% and 10\% of ImageNet labelled training set. In Table~\ref{tab:semi-supervised}, we demonstrate that using SEM lead to an important increased performance, especially in the low supervised data regime. 

\subsection{Semantic coherence of SEM features}

Here we demonstrate that SEM features are coherently aligned with the semantics present in the training data. 
Qualitatively, we visualize the most predictive features of a downstream linear classifier trained on \textsc{CIFAR-100} and see that the classes with similar predictive features are semantically related. Quantitatively we propose a metric that returns the ratio of features mostly predictive for a classes that are in the same super class to total number of class predictive for this feature. 

For both our analysis, we use a linear classifier trained on the features extracted from BYOL with and without SEM. Consider the trained linear classifier with a weight matrix $W\in\mathbb{R}^{N\times C}$, with $N$ features, and $C$ classes. 
By preserving the top $K$ parameters of the weight matrix $W$ for each class and pruning the features predictive for only one class, we create a bipartite graph between two set of nodes: the \textsc{CIFAR-100} classes and the features of the representation. We denote this graph $\mathcal W_K$.

\begin{figure}[t]
    \centering
    \begin{subfigure}[l]{0.70\linewidth}
    \begin{fbox}{
    \begin{minipage}{0.23\linewidth}
    \centering
     \includegraphics[width=\linewidth]{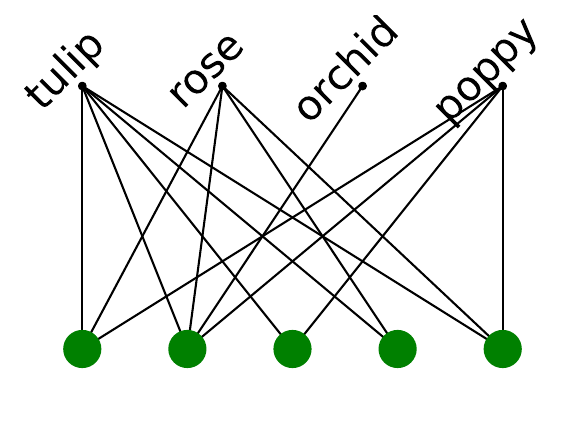}
    \end{minipage}
    \begin{minipage}{0.23\linewidth}
      \includegraphics[width=\linewidth]{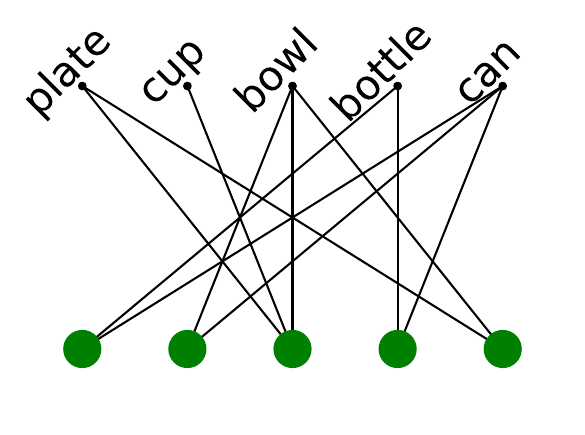}
    \end{minipage}
     \begin{minipage}{0.23\linewidth}
      \includegraphics[width=\linewidth]{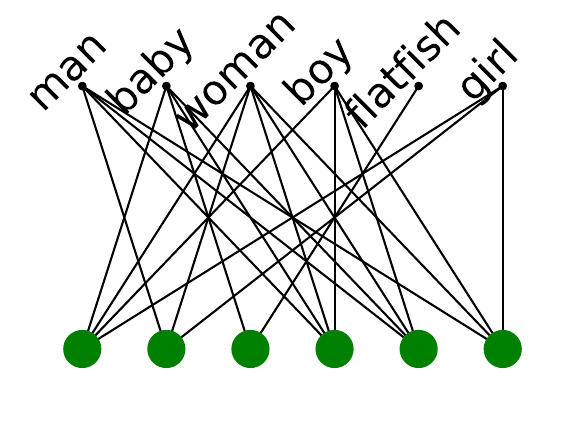}
    \end{minipage}
         \begin{minipage}{0.23\linewidth}
      \includegraphics[width=\linewidth]{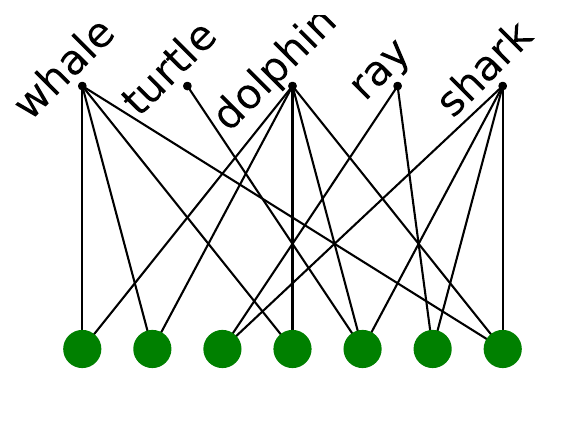}
    \end{minipage}
    }\end{fbox}
    \caption{BYOL + SEM}
    \label{fig:sem_graph}
    
    \hspace{-1.5cm}
    \begin{minipage}{1.3\linewidth}
         \includegraphics[width=\linewidth]{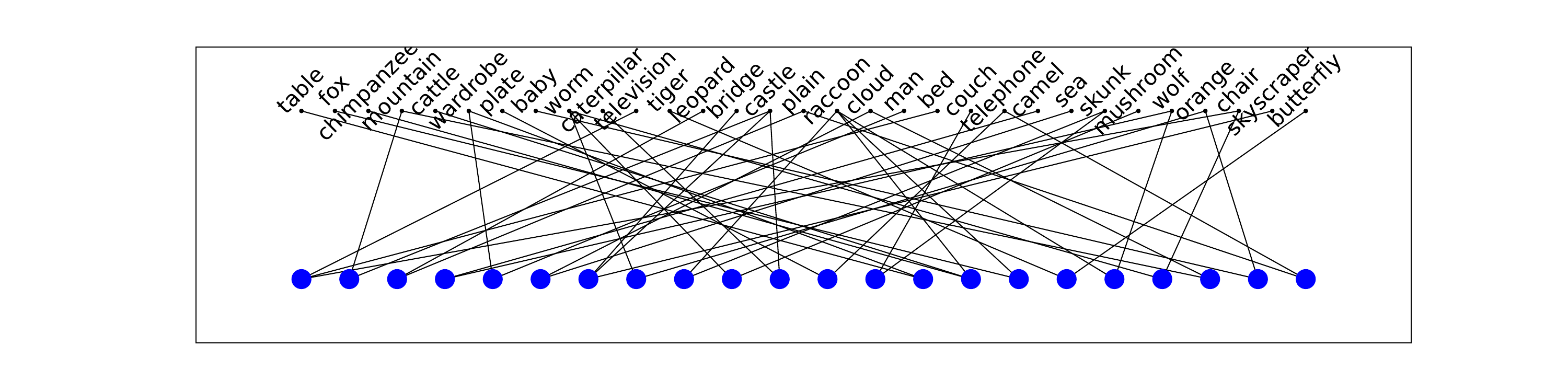}
    \caption{BYOL}
    \label{fig:byol_graph}
    \end{minipage}
    \end{subfigure}
    \hfill
    \begin{subfigure}[r]{0.26\linewidth}
      \includegraphics[width=\linewidth]{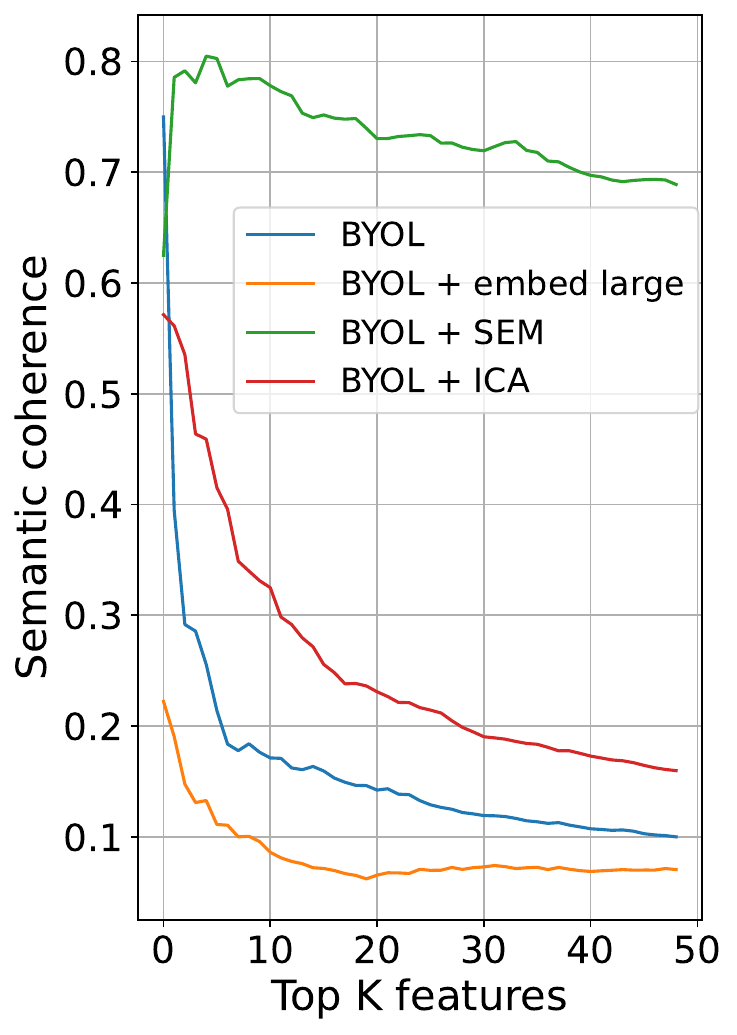}
      \caption{}
      \label{fig:sem_relevancy}
    \end{subfigure}
    \hfill
    \caption{Semantic coherence of the features. \textbf{(a)} and \textbf{(b)} Subset of $\mathcal W_5$, the bipartite graph of the most $5$ highest magnitude features 
    on BYOL + SEM features \textbf{(a)} and BYOL on the encoded features \textbf{(b)}. 
    \textbf{(c)} Coherence of the top $K$ features to the semantics of the super-class of the categories of \textsc{CIFAR-100}. It is taken as the number of pairwise categories in the same super-class for which a feature is among its top $K$ most predictive features over the total number of pairwise categories.}
    \label{fig:semantics_feature}
\end{figure}

The qualitative analysis is given by plotting the subset $\mathcal W_5$, obtained by taking the top $5$ features for each class. We present a subset of the graph for BYOL+SEM in Figure~\ref{fig:sem_graph} and for BYOL in Figure~\ref{fig:byol_graph}. The full graphs are presented in the Appendix. In the SEM plot, a set of connected components emerge, and the connected components of the graph are semantically related. For example, the first set of connected components are flowers, and the last set of connected components are aquatic mammals.
The same class coherence is not observed with either the BYOL baseline or with BYOL augmented with a large representation. In particular, we do not see a small number of semantically related connected components. Instead, we see a large fully connected graphs. 

Next, we describe how we quantitatively measure the semantic coherence of the features. Notice that two classes share a common predictive feature on $\mathcal W_K$ if they are 2-neighbour. Let $\mathcal{N}(c_i)$ returns all pairs $(c_i, c_j)$ for all $j$ 2-neighbour of $c_i$. Moreover, define the operation $\text{is\_super}(c_i, c_j)$ which returns 1 if $c_i$ and $c_j$ are from the same \textsc{CIFAR-100} superclass and $0$ otherwise. We reproduce the superclass of \textsc{CIFAR-100} in Table~\ref{tab:cifar_superclass} in the Appendix. We measure semantic coherence as follows:
\begin{equation}
    \text{Coherence}(\mathcal W_K) := \dfrac{1}{C}\sum_{i=1}^C\dfrac{\sum_{(c_i,c_j)\in\mathcal{N}(c_i)}\text{is\_super}(c_i, c_j)}{|\mathcal{N}(c_i)|},
\end{equation}
where $C=100$ for \textsc{CIFAR-100} and $|\cdot|$ is the cardinality of a set.

We compare the semantic coherence of \textsc{BYOL}+\textsc{SEM} with the control experiments on \textsc{BYOL}: regular \textsc{BYOL}, \textsc{BYOL} with an embedding of the same size as \textsc{BYOL}+\textsc{SEM} but without the normalization and BYOL to which we applied linear ICA~\citep{Hyvarinen00independentcomponent} in an attempt to disentangle the features. In Figure~\ref{fig:byol-full-rel}, we plot the full graph $\mathcal W_5$ for BYOL+SEM and the baselines. We observe that using the SEM yields semantically coherent features for all the classes of \textsc{CIFAR-100}. This observation is consistent with the qualitative and quantitative experiments presented earlier 
and demonstrates that SEM's inductive bias during pre-training leads to features that are semantically coherent with the semantic categories extant in the data. 
This arguably have important implications for improving the  interpretability of SSL representations.

\section{Conclusion}
\label{conclusion}

SEM is a simple, drop-in module that creates sparse overcomplete representations for standard SSL methods using a softmax operation. This simple modification leads to improved generalization on downstream classification across several state-of-the-art SSL methods. Furthermore, SEM improves performance on out-of-distribution, semi-supervised, and transfer learning tasks across the board and also scales with encoder size. By analyzing semantic coherence, we find that SEMs naturally disentangle data into semantic categories without any explicit training objectives. We hope this work motivates the investigation of representational inductive biases for SSL, in addition to models or different training procedures.

\section*{Acknowledgements}
The authors are grateful for the insightful discussions with Xavier Bouthillier, Hattie Zhou, Sébastien Lachapelle, Tristan Deleu, Yuchen Lu, Eeshan Dhekane, Maude Lizaire, Julien Roy and David Dobre. We acknowledge funding support from Samsung and Hitachi, as well as support from Aaron Courville's CIFAR CCAI chair. We also wish to acknowledge Mila and Compute Canada for providing the computing infrastructure that enabled this project. Finally, this project would not have been possible without the contribution of the following open source projects: Pytorch~\citep{pytorch}, Orion~\citep{orion}, Solo-Learn~\citep{turrisi2021sololearn}, Scikit-Learn~\citep{scikit-learn}, and Numpy~\citep{numpy}.

\bibliography{ref}

\begin{thebibliography}{62}
\providecommand{\natexlab}[1]{#1}
\providecommand{\url}[1]{\texttt{#1}}
\expandafter\ifx\csname urlstyle\endcsname\relax
  \providecommand{\doi}[1]{doi: #1}\else
  \providecommand{\doi}{doi: \begingroup \urlstyle{rm}\Url}\fi

\bibitem[Ba et~al.(2016)Ba, Kiros, and Hinton]{ba_2016_layernorm}
Jimmy~Lei Ba, Jamie~Ryan Kiros, and Geoffrey~E. Hinton.
\newblock Layer normalization, 2016.
\newblock URL \url{https://arxiv.org/abs/1607.06450}.

\bibitem[Bahdanau et~al.(2016)Bahdanau, Cho, and Bengio]{bahdanau_neural_2016}
Dzmitry Bahdanau, Kyunghyun Cho, and Yoshua Bengio.
\newblock Neural {Machine} {Translation} by {Jointly} {Learning} to {Align} and
  {Translate}.
\newblock \emph{arXiv:1409.0473 [cs, stat]}, May 2016.
\newblock URL \url{http://arxiv.org/abs/1409.0473}.
\newblock arXiv: 1409.0473.

\bibitem[Bardes et~al.(2022)Bardes, Ponce, and LeCun]{bardes2022vicreg}
Adrien Bardes, Jean Ponce, and Yann LeCun.
\newblock {VICR}eg: Variance-invariance-covariance regularization for
  self-supervised learning.
\newblock In \emph{International Conference on Learning Representations}, 2022.
\newblock URL \url{https://openreview.net/forum?id=xm6YD62D1Ub}.

\bibitem[Bossard et~al.(2014)Bossard, Guillaumin, and Van~Gool]{food101}
Lukas Bossard, Matthieu Guillaumin, and Luc Van~Gool.
\newblock Food-101 -- mining discriminative components with random forests.
\newblock In David Fleet, Tomas Pajdla, Bernt Schiele, and Tinne Tuytelaars
  (eds.), \emph{Computer Vision -- ECCV 2014}, pp.\  446--461, Cham, 2014.
  Springer International Publishing.
\newblock ISBN 978-3-319-10599-4.

\bibitem[Bouthillier et~al.(2022)Bouthillier, Tsirigotis, Corneau-Tremblay,
  Schweizer, Dong, Delaunay, Normandin, Bronzi, Suhubdy, Askari, Noukhovitch,
  Xue, Ortiz-Gagné, Breuleux, Bergeron, Bilaniuk, Bocco, Bertrand, Alain,
  Serdyuk, Henderson, Lamblin, and Beckham]{orion}
Xavier Bouthillier, Christos Tsirigotis, François Corneau-Tremblay, Thomas
  Schweizer, Lin Dong, Pierre Delaunay, Fabrice Normandin, Mirko Bronzi, Dendi
  Suhubdy, Reyhane Askari, Michael Noukhovitch, Chao Xue, Satya Ortiz-Gagné,
  Olivier Breuleux, Arnaud Bergeron, Olexa Bilaniuk, Steven Bocco, Hadrien
  Bertrand, Guillaume Alain, Dmitriy Serdyuk, Peter Henderson, Pascal Lamblin,
  and Christopher Beckham.
\newblock {Epistimio/orion: Asynchronous Distributed Hyperparameter
  Optimization}, March 2022.
\newblock URL \url{https://doi.org/10.5281/zenodo.3478592}.

\bibitem[Caron et~al.(2020)Caron, Misra, Mairal, Goyal, Bojanowski, and
  Joulin]{caron2020swav}
Mathilde Caron, Ishan Misra, Julien Mairal, Priya Goyal, Piotr Bojanowski, and
  Armand Joulin.
\newblock Unsupervised learning of visual features by contrasting cluster
  assignments.
\newblock In H.~Larochelle, M.~Ranzato, R.~Hadsell, M.F. Balcan, and H.~Lin
  (eds.), \emph{Advances in Neural Information Processing Systems}, volume~33,
  pp.\  9912--9924. Curran Associates, Inc., 2020.
\newblock URL
  \url{https://proceedings.neurips.cc/paper/2020/file/70feb62b69f16e0238f741fab228fec2-Paper.pdf}.

\bibitem[Caron et~al.(2021)Caron, Touvron, Misra, Jégou, Mairal, Bojanowski,
  and Joulin]{caron_emerging_2021}
Mathilde Caron, Hugo Touvron, Ishan Misra, Hervé Jégou, Julien Mairal, Piotr
  Bojanowski, and Armand Joulin.
\newblock Emerging {Properties} in {Self}-{Supervised} {Vision} {Transformers}.
\newblock \emph{arXiv:2104.14294 [cs]}, May 2021.
\newblock URL \url{http://arxiv.org/abs/2104.14294}.
\newblock arXiv: 2104.14294.

\bibitem[Chen et~al.(2020{\natexlab{a}})Chen, Liu, Chang, Cheng, Amini, and
  Wang]{chen_adversarial_2020}
Tianlong Chen, Sijia Liu, Shiyu Chang, Yu~Cheng, Lisa Amini, and Zhangyang
  Wang.
\newblock Adversarial robustness: From self-supervised pre-training to
  fine-tuning.
\newblock In \emph{CVPR 2020}, June 2020{\natexlab{a}}.

\bibitem[Chen et~al.(2020{\natexlab{b}})Chen, Kornblith, Norouzi, and
  Hinton]{chen2020simple}
Ting Chen, Simon Kornblith, Mohammad Norouzi, and Geoffrey Hinton.
\newblock A simple framework for contrastive learning of visual
  representations.
\newblock In Hal~Daumé III and Aarti Singh (eds.), \emph{Proceedings of the
  37th International Conference on Machine Learning}, volume 119 of
  \emph{Proceedings of Machine Learning Research}, pp.\  1597--1607. PMLR,
  13--18 Jul 2020{\natexlab{b}}.

\bibitem[Chen \& He(2020)Chen and He]{chen2020simsiam}
Xinlei Chen and Kaiming He.
\newblock Exploring simple siamese representation learning.
\newblock \emph{arXiv preprint arXiv:2011.10566}, 2020.

\bibitem[Cimpoi et~al.(2014)Cimpoi, Maji, Kokkinos, Mohamed, and
  Vedaldi]{cimpoi2014describing}
Mircea Cimpoi, Subhransu Maji, Iasonas Kokkinos, Sammy Mohamed, and Andrea
  Vedaldi.
\newblock Describing textures in the wild.
\newblock In \emph{Proceedings of the IEEE Conference on Computer Vision and
  Pattern Recognition}, pp.\  3606--3613, 2014.

\bibitem[Coates \& Ng(2011)Coates and Ng]{coates2011sparse}
Adam Coates and Andrew~Y. Ng.
\newblock The importance of encoding versus training with sparse coding and
  vector quantization.
\newblock In \emph{ICML}, pp.\  921--928, 2011.
\newblock URL \url{https://icml.cc/2011/papers/485_icmlpaper.pdf}.

\bibitem[Correia et~al.(2019)Correia, Niculae, and
  Martins]{correia-etal-2019-adaptively}
Gon{\c{c}}alo~M. Correia, Vlad Niculae, and Andr{\'e} F.~T. Martins.
\newblock Adaptively sparse transformers.
\newblock In \emph{Proceedings of the 2019 Conference on Empirical Methods in
  Natural Language Processing and the 9th International Joint Conference on
  Natural Language Processing (EMNLP-IJCNLP)}, pp.\  2174--2184, Hong Kong,
  China, November 2019. Association for Computational Linguistics.
\newblock \doi{10.18653/v1/D19-1223}.
\newblock URL \url{https://aclanthology.org/D19-1223}.

\bibitem[da~Costa et~al.(2021)da~Costa, Fini, Nabi, Sebe, and
  Ricci]{turrisi2021sololearn}
Victor G.~Turrisi da~Costa, Enrico Fini, Moin Nabi, Nicu Sebe, and Elisa Ricci.
\newblock Solo-learn: A library of self-supervised methods for visual
  representation learning, 2021.
\newblock URL \url{https://github.com/vturrisi/solo-learn}.

\bibitem[Deng et~al.(2009)Deng, Dong, Socher, Li, Li, and
  Fei-Fei]{deng2009imagenet}
Jia Deng, Wei Dong, Richard Socher, Li-Jia Li, Kai Li, and Li~Fei-Fei.
\newblock Imagenet: A large-scale hierarchical image database.
\newblock In \emph{2009 IEEE conference on computer vision and pattern
  recognition}, pp.\  248--255. Ieee, 2009.

\bibitem[Dess{\`{\i}} et~al.(2021)Dess{\`{\i}}, Kharitonov, and
  Baroni]{dess2021comp}
Roberto Dess{\`{\i}}, Eugene Kharitonov, and Marco Baroni.
\newblock Interpretable agent communication from scratch(with a generic visual
  processor emerging on the side).
\newblock \emph{CoRR}, abs/2106.04258, 2021.
\newblock URL \url{https://arxiv.org/abs/2106.04258}.

\bibitem[Dessi et~al.(2021)Dessi, Kharitonov, and
  Baroni]{dessi2021interpretable}
Roberto Dessi, Eugene Kharitonov, and Marco Baroni.
\newblock Interpretable agent communication from scratch (with a generic visual
  processor emerging on the side).
\newblock In A.~Beygelzimer, Y.~Dauphin, P.~Liang, and J.~Wortman Vaughan
  (eds.), \emph{Advances in Neural Information Processing Systems}, 2021.
\newblock URL \url{https://openreview.net/forum?id=1AvtkM4H-y7}.

\bibitem[Djolonga et~al.(2020)Djolonga, Hubis, Minderer, Nado, Nixon,
  Romijnders, Tran, and Lucic]{djolonga2020robustness}
Josip Djolonga, Frances Hubis, Matthias Minderer, Zachary Nado, Jeremy Nixon,
  Rob Romijnders, Dustin Tran, and Mario Lucic.
\newblock {R}obustness {M}etrics, 2020.
\newblock URL \url{https://github.com/google-research/robustness_metrics}.

\bibitem[Djolonga et~al.(2021)Djolonga, Yung, Tschannen, Romijnders, Beyer,
  Kolesnikov, Puigcerver, Minderer, D'Amour, Moldovan, Gelly, Houlsby, Zhai,
  and Lucic]{djolonga_robustness_2021}
Josip Djolonga, Jessica Yung, Michael Tschannen, Rob Romijnders, Lucas Beyer,
  Alexander Kolesnikov, Joan Puigcerver, Matthias Minderer, Alexander D'Amour,
  Dan Moldovan, Sylvain Gelly, Neil Houlsby, Xiaohua Zhai, and Mario Lucic.
\newblock On {Robustness} and {Transferability} of {Convolutional} {Neural}
  {Networks}.
\newblock \emph{arXiv:2007.08558 [cs]}, March 2021.
\newblock URL \url{http://arxiv.org/abs/2007.08558}.
\newblock arXiv: 2007.08558.

\bibitem[Donoho et~al.(2006)Donoho, Elad, and
  Temlyakov]{dohono2006stablesparse}
D.L. Donoho, M.~Elad, and V.N. Temlyakov.
\newblock Stable recovery of sparse overcomplete representations in the
  presence of noise.
\newblock \emph{IEEE Transactions on Information Theory}, 52\penalty0
  (1):\penalty0 6--18, 2006.
\newblock \doi{10.1109/TIT.2005.860430}.

\bibitem[Faruqui et~al.(2015)Faruqui, Tsvetkov, Yogatama, Dyer, and
  Smith]{faruqui-etal-2015-sparse}
Manaal Faruqui, Yulia Tsvetkov, Dani Yogatama, Chris Dyer, and Noah~A. Smith.
\newblock Sparse overcomplete word vector representations.
\newblock In \emph{Proceedings of the 53rd Annual Meeting of the Association
  for Computational Linguistics and the 7th International Joint Conference on
  Natural Language Processing (Volume 1: Long Papers)}, pp.\  1491--1500,
  Beijing, China, July 2015. Association for Computational Linguistics.
\newblock \doi{10.3115/v1/P15-1144}.
\newblock URL \url{https://aclanthology.org/P15-1144}.

\bibitem[Fyshe et~al.(2015)Fyshe, Wehbe, Talukdar, Murphy, and
  Mitchell]{fyshe2015compositional}
Alona Fyshe, Leila Wehbe, Partha~P. Talukdar, Brian Murphy, and Tom~M.
  Mitchell.
\newblock A compositional and interpretable semantic space.
\newblock In \emph{Proceedings of the 2015 Conference of the North {A}merican
  Chapter of the Association for Computational Linguistics: Human Language
  Technologies}, pp.\  32--41, Denver, Colorado, May{--}June 2015. Association
  for Computational Linguistics.
\newblock \doi{10.3115/v1/N15-1004}.
\newblock URL \url{https://aclanthology.org/N15-1004}.

\bibitem[Goodfellow et~al.(2012)Goodfellow, Courville, and
  Bengio]{goodfellow2012sparse}
Ian~J. Goodfellow, Aaron Courville, and Yoshua Bengio.
\newblock Large-scale feature learning with spike-and-slab sparse coding.
\newblock In \emph{Proceedings of the 29th International Coference on
  International Conference on Machine Learning}, ICML'12, pp.\  1387–1394,
  Madison, WI, USA, 2012. Omnipress.
\newblock ISBN 9781450312851.

\bibitem[Goyal et~al.(2022)Goyal, Didolkar, Lamb, Badola, Ke, Rahaman, Binas,
  Blundell, Mozer, and Bengio]{goyal2022coordination}
Anirudh Goyal, Aniket~Rajiv Didolkar, Alex Lamb, Kartikeya Badola, Nan~Rosemary
  Ke, Nasim Rahaman, Jonathan Binas, Charles Blundell, Michael~Curtis Mozer,
  and Yoshua Bengio.
\newblock Coordination among neural modules through a shared global workspace.
\newblock In \emph{International Conference on Learning Representations}, 2022.
\newblock URL \url{https://openreview.net/forum?id=XzTtHjgPDsT}.

\bibitem[Graves et~al.(2014)Graves, Wayne, and Danihelka]{graves_neural_2014}
Alex Graves, Greg Wayne, and Ivo Danihelka.
\newblock Neural {Turing} {Machines}.
\newblock \emph{arXiv:1410.5401 [cs]}, December 2014.
\newblock URL \url{http://arxiv.org/abs/1410.5401}.
\newblock arXiv: 1410.5401.

\bibitem[Gregor \& LeCun(2010)Gregor and LeCun]{gregor2010sparsecode}
Karol Gregor and Yann LeCun.
\newblock Learning fast approximations of sparse coding.
\newblock In \emph{Proceedings of the 27th International Conference on
  International Conference on Machine Learning}, ICML'10, pp.\  399–406,
  Madison, WI, USA, 2010. Omnipress.
\newblock ISBN 9781605589077.

\bibitem[Grill et~al.(2020)Grill, Strub, Altch\'{e}, Tallec, Richemond,
  Buchatskaya, Doersch, Avila~Pires, Guo, Gheshlaghi~Azar, Piot, kavukcuoglu,
  Munos, and Valko]{grill2020byol}
Jean-Bastien Grill, Florian Strub, Florent Altch\'{e}, Corentin Tallec, Pierre
  Richemond, Elena Buchatskaya, Carl Doersch, Bernardo Avila~Pires, Zhaohan
  Guo, Mohammad Gheshlaghi~Azar, Bilal Piot, koray kavukcuoglu, Remi Munos, and
  Michal Valko.
\newblock Bootstrap your own latent - a new approach to self-supervised
  learning.
\newblock In H.~Larochelle, M.~Ranzato, R.~Hadsell, M.~F. Balcan, and H.~Lin
  (eds.), \emph{Advances in Neural Information Processing Systems}, volume~33,
  pp.\  21271--21284. Curran Associates, Inc., 2020.
\newblock URL
  \url{https://proceedings.neurips.cc/paper/2020/file/f3ada80d5c4ee70142b17b8192b2958e-Paper.pdf}.

\bibitem[Harris et~al.(2020)Harris, Millman, van~der Walt, Gommers, Virtanen,
  Cournapeau, Wieser, Taylor, Berg, Smith, Kern, Picus, Hoyer, van Kerkwijk,
  Brett, Haldane, del R{\'{i}}o, Wiebe, Peterson, G{\'{e}}rard-Marchant,
  Sheppard, Reddy, Weckesser, Abbasi, Gohlke, and Oliphant]{numpy}
Charles~R. Harris, K.~Jarrod Millman, St{\'{e}}fan~J. van~der Walt, Ralf
  Gommers, Pauli Virtanen, David Cournapeau, Eric Wieser, Julian Taylor,
  Sebastian Berg, Nathaniel~J. Smith, Robert Kern, Matti Picus, Stephan Hoyer,
  Marten~H. van Kerkwijk, Matthew Brett, Allan Haldane, Jaime~Fern{\'{a}}ndez
  del R{\'{i}}o, Mark Wiebe, Pearu Peterson, Pierre G{\'{e}}rard-Marchant,
  Kevin Sheppard, Tyler Reddy, Warren Weckesser, Hameer Abbasi, Christoph
  Gohlke, and Travis~E. Oliphant.
\newblock Array programming with {NumPy}.
\newblock \emph{Nature}, 585\penalty0 (7825):\penalty0 357--362, September
  2020.
\newblock \doi{10.1038/s41586-020-2649-2}.
\newblock URL \url{https://doi.org/10.1038/s41586-020-2649-2}.

\bibitem[He et~al.(2020)He, Fan, Wu, Xie, and Girshick]{he_momentum_2020}
Kaiming He, Haoqi Fan, Yuxin Wu, Saining Xie, and Ross Girshick.
\newblock Momentum {Contrast} for {Unsupervised} {Visual} {Representation}
  {Learning}.
\newblock \emph{arXiv:1911.05722 [cs]}, March 2020.
\newblock URL \url{http://arxiv.org/abs/1911.05722}.
\newblock arXiv: 1911.05722.

\bibitem[Hendrycks \& Dietterich(2019)Hendrycks and
  Dietterich]{hendrycks_benchmarking_2018}
Dan Hendrycks and Thomas Dietterich.
\newblock Benchmarking neural network robustness to common corruptions and
  perturbations.
\newblock In \emph{International Conference on Learning Representations}, 2019.
\newblock URL \url{https://openreview.net/forum?id=HJz6tiCqYm}.

\bibitem[Hendrycks et~al.(2021)Hendrycks, Basart, Mu, Kadavath, Wang, Dorundo,
  Desai, Zhu, Parajuli, Guo, Song, Steinhardt, and Gilmer]{hendrycks_many_2021}
Dan Hendrycks, Steven Basart, Norman Mu, Saurav Kadavath, Frank Wang, Evan
  Dorundo, Rahul Desai, Tyler Zhu, Samyak Parajuli, Mike Guo, Dawn Song, Jacob
  Steinhardt, and Justin Gilmer.
\newblock The {Many} {Faces} of {Robustness}: {A} {Critical} {Analysis} of
  {Out}-of-{Distribution} {Generalization}.
\newblock \emph{arXiv:2006.16241 [cs, stat]}, July 2021.
\newblock URL \url{http://arxiv.org/abs/2006.16241}.
\newblock arXiv: 2006.16241.

\bibitem[Hjelm et~al.(2019)Hjelm, Fedorov, Lavoie-Marchildon, Grewal, Bachman,
  Trischler, and Bengio]{hjelm2018learning}
R~Devon Hjelm, Alex Fedorov, Samuel Lavoie-Marchildon, Karan Grewal, Phil
  Bachman, Adam Trischler, and Yoshua Bengio.
\newblock Learning deep representations by mutual information estimation and
  maximization.
\newblock In \emph{International Conference on Learning Representations}, 2019.
\newblock URL \url{https://openreview.net/forum?id=Bklr3j0cKX}.

\bibitem[Hyvärinen \& Oja(2000)Hyvärinen and
  Oja]{Hyvarinen00independentcomponent}
Aapo Hyvärinen and Erkki Oja.
\newblock Independent component analysis: algorithms and applications.
\newblock \emph{Neural Networks}, 13:\penalty0 411--430, 2000.

\bibitem[Ioffe \& Szegedy(2015)Ioffe and Szegedy]{ioffe2015bn}
Sergey Ioffe and Christian Szegedy.
\newblock Batch normalization: Accelerating deep network training by reducing
  internal covariate shift.
\newblock In Francis Bach and David Blei (eds.), \emph{Proceedings of the 32nd
  International Conference on Machine Learning}, volume~37 of \emph{Proceedings
  of Machine Learning Research}, pp.\  448--456, Lille, France, 07--09 Jul
  2015. PMLR.
\newblock URL \url{https://proceedings.mlr.press/v37/ioffe15.html}.

\bibitem[Jang et~al.(2017)Jang, Gu, and Poole]{jang2017gumbel}
Eric Jang, Shixiang Gu, and Ben Poole.
\newblock Categorical reparameterization with gumbel-softmax.
\newblock In \emph{International Conference on Learning Representations}, 2017.
\newblock URL \url{https://openreview.net/forum?id=rkE3y85ee}.

\bibitem[Jing et~al.(2022)Jing, Vincent, LeCun, and
  Tian]{jing2022understanding}
Li~Jing, Pascal Vincent, Yann LeCun, and Yuandong Tian.
\newblock Understanding dimensional collapse in contrastive self-supervised
  learning.
\newblock In \emph{International Conference on Learning Representations}, 2022.
\newblock URL \url{https://openreview.net/forum?id=YevsQ05DEN7}.

\bibitem[Jordan \& Jacobs(1993)Jordan and Jacobs]{jordan_1993_mixture}
M.I. Jordan and R.A. Jacobs.
\newblock Hierarchical mixtures of experts and the em algorithm.
\newblock In \emph{Proceedings of 1993 International Conference on Neural
  Networks (IJCNN-93-Nagoya, Japan)}, volume~2, pp.\  1339--1344 vol.2, 1993.
\newblock \doi{10.1109/IJCNN.1993.716791}.

\bibitem[Kolesnikov et~al.(2019)Kolesnikov, Zhai, and
  Beyer]{kolesnikov_revisiting2019}
Alexander Kolesnikov, Xiaohua Zhai, and Lucas Beyer.
\newblock Revisiting self-supervised visual representation learning.
\newblock \emph{CoRR}, abs/1901.09005, 2019.
\newblock URL \url{http://arxiv.org/abs/1901.09005}.

\bibitem[Krizhevsky(2009)]{krizhevsky09cifar}
Alex Krizhevsky.
\newblock Learning multiple layers of features from tiny images, 2009.
\newblock URL
  \url{https://www.cs.toronto.edu/~kriz/learning-features-2009-TR.pdf}.

\bibitem[Lee et~al.(2007)Lee, Ekanadham, and Ng]{lee2007sparsev2}
Honglak Lee, Chaitanya Ekanadham, and Andrew Ng.
\newblock Sparse deep belief net model for visual area v2.
\newblock In J.~Platt, D.~Koller, Y.~Singer, and S.~Roweis (eds.),
  \emph{Advances in Neural Information Processing Systems}, volume~20. Curran
  Associates, Inc., 2007.
\newblock URL
  \url{https://proceedings.neurips.cc/paper/2007/file/4daa3db355ef2b0e64b472968cb70f0d-Paper.pdf}.

\bibitem[Lee et~al.(2021)Lee, Arnab, Guadarrama, Canny, and
  Fischer]{lee_compressive_2021}
Kuang-Huei Lee, Anurag Arnab, Sergio Guadarrama, John Canny, and Ian Fischer.
\newblock Compressive {Visual} {Representations}.
\newblock \emph{arXiv:2109.12909 [cs, math]}, September 2021.
\newblock URL \url{http://arxiv.org/abs/2109.12909}.
\newblock arXiv: 2109.12909.

\bibitem[Lewicki \& Sejnowski(2000)Lewicki and Sejnowski]{lewicki2000overcomp}
Michael~S. Lewicki and Terrence~J. Sejnowski.
\newblock {Learning Overcomplete Representations}.
\newblock \emph{Neural Computation}, 12\penalty0 (2):\penalty0 337--365, 02
  2000.
\newblock ISSN 0899-7667.
\newblock \doi{10.1162/089976600300015826}.
\newblock URL \url{https://doi.org/10.1162/089976600300015826}.

\bibitem[Liu et~al.(2021)Liu, Lamb, Kawaguchi, ALIAS PARTH~GOYAL, Sun, Mozer,
  and Bengio]{liu_discrete-valued_2021}
Dianbo Liu, Alex~M Lamb, Kenji Kawaguchi, Anirudh~Goyal ALIAS PARTH~GOYAL, Chen
  Sun, Michael~C Mozer, and Yoshua Bengio.
\newblock Discrete-valued neural communication.
\newblock In M.~Ranzato, A.~Beygelzimer, Y.~Dauphin, P.S. Liang, and J.~Wortman
  Vaughan (eds.), \emph{Advances in Neural Information Processing Systems},
  volume~34, pp.\  2109--2121. Curran Associates, Inc., 2021.
\newblock URL
  \url{https://proceedings.neurips.cc/paper/2021/file/10907813b97e249163587e6246612e21-Paper.pdf}.

\bibitem[Murphy et~al.(2012)Murphy, Talukdar, and
  Mitchell]{Murphy2012LearningEA}
Brian Murphy, Partha~Pratim Talukdar, and Tom~Michael Mitchell.
\newblock Learning effective and interpretable semantic models using
  non-negative sparse embedding.
\newblock In \emph{COLING}, 2012.

\bibitem[Nilsback \& Zisserman(2008)Nilsback and Zisserman]{flowers}
Maria-Elena Nilsback and Andrew Zisserman.
\newblock Automated flower classification over a large number of classes.
\newblock In \emph{2008 Sixth Indian Conference on Computer Vision, Graphics
  Image Processing}, pp.\  722--729, 2008.
\newblock \doi{10.1109/ICVGIP.2008.47}.

\bibitem[Olshausen \& Field(1996)Olshausen and
  Field]{Olshausen1996sparsevisual}
B.A. Olshausen and D.J. Field.
\newblock Emergence of simple-cell receptive field properties by learning a
  sparse code for natural images.
\newblock \emph{Nature}, 381:\penalty0 607--609, June 1996.

\bibitem[{Olshausen}(2013)]{olshausen2013overcompletesparse}
Bruno~A. {Olshausen}.
\newblock {Highly overcomplete sparse coding}.
\newblock In Bernice~E. {Rogowitz}, Thrasyvoulos~N. {Pappas}, and Huib {de
  Ridder} (eds.), \emph{Human Vision and Electronic Imaging XVIII}, volume 8651
  of \emph{Society of Photo-Optical Instrumentation Engineers (SPIE) Conference
  Series}, pp.\  86510S, March 2013.
\newblock \doi{10.1117/12.2013504}.

\bibitem[Oord et~al.(2018)Oord, Vinyals, and Kavukcuoglu]{oord_neural_2018}
Aaron van~den Oord, Oriol Vinyals, and Koray Kavukcuoglu.
\newblock Neural {Discrete} {Representation} {Learning}.
\newblock \emph{arXiv:1711.00937 [cs]}, May 2018.
\newblock URL \url{http://arxiv.org/abs/1711.00937}.
\newblock arXiv: 1711.00937.

\bibitem[Paszke et~al.(2019)Paszke, Gross, Massa, Lerer, Bradbury, Chanan,
  Killeen, Lin, Gimelshein, Antiga, Desmaison, Kopf, Yang, DeVito, Raison,
  Tejani, Chilamkurthy, Steiner, Fang, Bai, and Chintala]{pytorch}
Adam Paszke, Sam Gross, Francisco Massa, Adam Lerer, James Bradbury, Gregory
  Chanan, Trevor Killeen, Zeming Lin, Natalia Gimelshein, Luca Antiga, Alban
  Desmaison, Andreas Kopf, Edward Yang, Zachary DeVito, Martin Raison, Alykhan
  Tejani, Sasank Chilamkurthy, Benoit Steiner, Lu~Fang, Junjie Bai, and Soumith
  Chintala.
\newblock Pytorch: An imperative style, high-performance deep learning library.
\newblock In H.~Wallach, H.~Larochelle, A.~Beygelzimer, F.~d\textquotesingle
  Alch\'{e}-Buc, E.~Fox, and R.~Garnett (eds.), \emph{Advances in Neural
  Information Processing Systems 32}, pp.\  8024--8035. Curran Associates,
  Inc., 2019.
\newblock URL
  \url{http://papers.neurips.cc/paper/9015-pytorch-an-imperative-style-high-performance-deep-learning-library.pdf}.

\bibitem[Pedregosa et~al.(2011)Pedregosa, Varoquaux, Gramfort, Michel, Thirion,
  Grisel, Blondel, Prettenhofer, Weiss, Dubourg, Vanderplas, Passos,
  Cournapeau, Brucher, Perrot, and Duchesnay]{scikit-learn}
F.~Pedregosa, G.~Varoquaux, A.~Gramfort, V.~Michel, B.~Thirion, O.~Grisel,
  M.~Blondel, P.~Prettenhofer, R.~Weiss, V.~Dubourg, J.~Vanderplas, A.~Passos,
  D.~Cournapeau, M.~Brucher, M.~Perrot, and E.~Duchesnay.
\newblock Scikit-learn: Machine learning in {P}ython.
\newblock \emph{Journal of Machine Learning Research}, 12:\penalty0 2825--2830,
  2011.

\bibitem[Recht et~al.(2019)Recht, Roelofs, Schmidt, and
  Shankar]{recht_imagenet_2019}
Benjamin Recht, Rebecca Roelofs, Ludwig Schmidt, and Vaishaal Shankar.
\newblock Do {ImageNet} {Classifiers} {Generalize} to {ImageNet}?
\newblock In \emph{Proceedings of the 36th {International} {Conference} on
  {Machine} {Learning}}, pp.\  5389--5400. PMLR, May 2019.
\newblock URL \url{https://proceedings.mlr.press/v97/recht19a.html}.
\newblock ISSN: 2640-3498.

\bibitem[Saeed et~al.(2020)Saeed, Grangier, and
  Zeghidour]{saeed_contrastive_2020}
Aaqib Saeed, David Grangier, and Neil Zeghidour.
\newblock Contrastive {Learning} of {General}-{Purpose} {Audio}
  {Representations}.
\newblock \emph{arXiv:2010.10915 [cs, eess]}, October 2020.
\newblock URL \url{http://arxiv.org/abs/2010.10915}.
\newblock arXiv: 2010.10915.

\bibitem[Srivastava et~al.(2014)Srivastava, Hinton, Krizhevsky, Sutskever, and
  Salakhutdinov]{srivastava_2014_dropout}
Nitish Srivastava, Geoffrey Hinton, Alex Krizhevsky, Ilya Sutskever, and Ruslan
  Salakhutdinov.
\newblock Dropout: A simple way to prevent neural networks from overfitting.
\newblock \emph{Journal of Machine Learning Research}, 15\penalty0
  (56):\penalty0 1929--1958, 2014.
\newblock URL \url{http://jmlr.org/papers/v15/srivastava14a.html}.

\bibitem[Teh et~al.(2003)Teh, Welling, Osindero, and
  Hinton]{teh2003energysparse}
Yee~Whye Teh, Max Welling, Simon Osindero, and Geoffrey~E. Hinton.
\newblock Energy-based models for sparse overcomplete representations.
\newblock \emph{J. Mach. Learn. Res.}, 4\penalty0 (null):\penalty0 1235–1260,
  dec 2003.
\newblock ISSN 1532-4435.

\bibitem[van~den Oord et~al.(2018)van~den Oord, Li, and Vinyals]{oord_2018_cpc}
A{\"{a}}ron van~den Oord, Yazhe Li, and Oriol Vinyals.
\newblock Representation learning with contrastive predictive coding.
\newblock \emph{CoRR}, abs/1807.03748, 2018.
\newblock URL \url{http://arxiv.org/abs/1807.03748}.

\bibitem[{van der Vaart} \& Wellner(1996){van der Vaart} and Wellner]{van1996}
Aad~W. {van der Vaart} and Jon~A. Wellner.
\newblock \emph{Weak Convergence and Empirical Processes}.
\newblock Springer New York, 1996.
\newblock \doi{10.1007/978-1-4757-2545-2}.
\newblock URL \url{https://doi.org/10.1007%2F978-1-4757-2545-2}.

\bibitem[Vaswani et~al.(2017)Vaswani, Shazeer, Parmar, Uszkoreit, Jones, Gomez,
  Kaiser, and Polosukhin]{vaswani_attention_2017}
Ashish Vaswani, Noam Shazeer, Niki Parmar, Jakob Uszkoreit, Llion Jones,
  Aidan~N. Gomez, Lukasz Kaiser, and Illia Polosukhin.
\newblock Attention {Is} {All} {You} {Need}.
\newblock \emph{arXiv:1706.03762 [cs]}, December 2017.
\newblock URL \url{http://arxiv.org/abs/1706.03762}.
\newblock arXiv: 1706.03762 version: 5.

\bibitem[Wang et~al.(2021{\natexlab{a}})Wang, Lan, Liu, Ouyang, Qin, Lu, Chen,
  Zeng, and Yu]{wang_generalizing_2021}
Jindong Wang, Cuiling Lan, Chang Liu, Yidong Ouyang, Tao Qin, Wang Lu, Yiqiang
  Chen, Wenjun Zeng, and Philip~S. Yu.
\newblock Generalizing to {Unseen} {Domains}: {A} {Survey} on {Domain}
  {Generalization}.
\newblock \emph{arXiv:2103.03097 [cs]}, December 2021{\natexlab{a}}.
\newblock URL \url{http://arxiv.org/abs/2103.03097}.
\newblock arXiv: 2103.03097.

\bibitem[Wang et~al.(2021{\natexlab{b}})Wang, Liu, and Liu]{wang2021vanishing}
Shulun Wang, Bin Liu, and Feng Liu.
\newblock Escaping the gradient vanishing: Periodic alternatives of softmax in
  attention mechanism, 2021{\natexlab{b}}.
\newblock URL \url{https://arxiv.org/abs/2108.07153}.

\bibitem[Xiao et~al.(2010)Xiao, Hays, Ehinger, Oliva, and
  Torralba]{xiao2010sun}
Jianxiong Xiao, James Hays, Krista~A Ehinger, Aude Oliva, and Antonio Torralba.
\newblock Sun database: Large-scale scene recognition from abbey to zoo.
\newblock In \emph{2010 IEEE computer society conference on computer vision and
  pattern recognition}, pp.\  3485--3492. IEEE, 2010.

\bibitem[You et~al.(2020)You, Chen, Sui, Chen, Wang, and
  Shen]{you_graph_nodate}
Yuning You, Tianlong Chen, Yongduo Sui, Ting Chen, Zhangyang Wang, and Yang
  Shen.
\newblock Graph contrastive learning with augmentations.
\newblock \emph{CoRR}, abs/2010.13902, 2020.
\newblock URL \url{https://arxiv.org/abs/2010.13902}.

\bibitem[Zbontar et~al.(2021)Zbontar, Jing, Misra, LeCun, and
  Deny]{zbontar2021barlow}
Jure Zbontar, Li~Jing, Ishan Misra, Yann LeCun, and St{\'e}phane Deny.
\newblock Barlow twins: Self-supervised learning via redundancy reduction.
\newblock \emph{arXiv preprint arXiv:2103.03230}, 2021.

\end{thebibliography}
\bibliographystyle{iclr2023_conference}

\appendix

\newpage

\section{Proof of Theorem \ref{thm:1}}
\label{sec-proof}
Let us introduce  additional notations used in the proofs. Define $r=(z,y) \in \Rcal$,
$
\ell(f,r)=l(f(z),y),
$

$$
\tilde \Ccal_{y,k_1,\dots,k_L}= \{(z,\hat y) \in \Zcal \times \Ycal :   \hat y = y, k_j=\argmax _{t \in [V]} z_{j,t} \ \ \ \forall j \in [L]\},
$$
and
$$
\tilde \Zcal_{k_1,\dots,k_L}= \{z \in \Zcal  :  k_j=\argmax _{t \in [V]} z_{j,t} \ \ \ \forall j \in [L]\}.
$$  
We then define $\Ccal_k$ to be the flatten version of $\tilde \Ccal_{y,k_1,\dots,k_L}$; i.e., $\{\Ccal_k\}_{k=1}^K=\{\tilde \Ccal_{y,k_1,\dots,k_L,y}\}_{y\in\Ycal,k_1,\dots,k_L\in [V]}$ with $C_1=\tilde \Ccal_{1,1,\dots,1}$, $C_2=\tilde \Ccal_{2,1,\dots,1}$, $C_{|\Ycal|}=\tilde \Ccal_{|\Ycal|,1,\dots,1}$, $C_{|\Ycal|+1}=\tilde \Ccal_{1,2,1,\dots,1}$, $C_{2|\Ycal|}=\tilde \Ccal_{|\Ycal|,2,1,\dots,1}$, and so on.
Similarly, define $\Zcal_k$ to be the flatten version of $\tilde \Zcal_{k_1,\dots,k_L}$. We also use $
\Qcal_{i}= \{q \in [-1,+1]^{ V} :  i=\argmax _{j \in [V]} q_{j}\}, 
$
$
\Ical_{k}:=\Ical_{k}^S:=\{i\in[n]: r_{i}\in \Ccal_{k}\},
$
and
$
\alpha_{k}(h):=\EE_{r}[\ell(h ,r)|r\in  \Ccal_{k}].
$
Moreover, we define 
$
\varphi(f_\base^{S})=\sup_{i\in [V]}\sup_{q,q' \in Q_i}\|q-q'\|_2^2, $
and
$
\varphi(f_{\SD(\tau)}^{S})= \sup_{i\in [V]}\sup_{q,q' \in Q_i}\|\sigma_{\tau}(q)-\sigma_{\tau}(q')\|_2^2
$
where $\sigma_{\tau}(q)_{j}= \frac{e^{q_{j}/\tau}}{\sum_{t=1}^V e^{q_{t}/\tau}}$ for $j=1,\dots,V$.

We first decompose the generalization gap into two terms using the following lemma: 

\begin{lemma} \label{lemma:1}
 For any $\delta>0$, with probability at least $1-\delta$,the following holds for all $h \in \Hcal$: 
\begin{align*} 
\EE_{r}[\ell(h ,r)]   - \frac{1}{n} \sum_{i=1}^n \ell(h, r_i) 
 \le \frac{1}{n}\sum_{k=1}^K  |\Ical_{k}|\left(\alpha_{k}(h)-\frac{1}{|\Ical_{k}|}\sum_{i \in \Ical_{k}}\ell(h ,r_{i}) \right)+c \sqrt{\frac{\ln(2/\delta)}{n}}. 
\end{align*}
\end{lemma}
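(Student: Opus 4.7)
The plan is to decompose both sides of the claimed inequality along the partition $\{\Ccal_k\}_k$ of $\Rcal=\Zcal\times\Ycal$, then reduce the statement to a purely $h$-independent multinomial concentration on the empirical cell frequencies against their true probabilities. Because the algebraic decomposition is exact, no uniform-in-$h$ concentration is needed inside it; the sole probabilistic step will involve only $(p_k,\hat p_k)_k$.

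Writing $p_k:=\Pr(r\in\Ccal_k)$, $\hat p_k:=|\Ical_k|/n$, and $\bar\ell_k(h):=\tfrac{1}{|\Ical_k|}\sum_{i\in\Ical_k}\ell(h,r_i)$ (with empty-cell terms set to zero), the fact that $\{\Ccal_k\}_k$ partitions $\Rcal$ gives
$$\EE_{r}[\ell(h,r)] \;=\; \sum_k p_k\,\alpha_k(h), \qquad \tfrac{1}{n}\sum_{i=1}^{n}\ell(h,r_i) \;=\; \sum_k \hat p_k\,\bar\ell_k(h).$$
The double sum on the right-hand side of the lemma collapses to $\sum_k \hat p_k\,\alpha_k(h) - \tfrac{1}{n}\sum_i \ell(h,r_i)$, so cancelling the common empirical-loss term on both sides reduces the claim to the $h$-uniform tail inequality
$$\sum_k (p_k-\hat p_k)\,\alpha_k(h) \;\le\; c\sqrt{\tfrac{\ln(2/\delta)}{n}}.$$

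Using $\sum_k(p_k-\hat p_k)=0$ together with $\alpha_k(h)\in[0,B]$, I would bound the left-hand side by $B\sum_k(p_k-\hat p_k)_+ = \tfrac{B}{2}\|p-\hat p\|_1$, an $h$-free quantity. It then suffices to prove $\|p-\hat p\|_1\le c'\sqrt{\ln(2/\delta)/n}$ with probability at least $1-\delta$, which is standard multinomial concentration: one may invoke the Bretagnolle--Huber--Carol inequality, or apply McDiarmid's bounded-differences inequality to the function $(r_1,\dots,r_n)\mapsto\|p-\hat p\|_1$, which changes by at most $2/n$ when a single $r_i$ is replaced (mass $1/n$ leaves one cell and enters another).

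The main subtlety, and the reason the argument above is the natural one, is the requirement that $c$ be universal with respect to $\Hcal$: the $L^1$ reduction sidesteps any covering or Rademacher argument over $\Hcal$ at the cost of absorbing the number of cells $K=|\Ycal|V^L$ into $c$. An alternative per-$h$ Hoeffding bound on the i.i.d.\ centered summands $\alpha_{k(r_i)}(h)-\EE_r[\alpha_{k(r)}(h)]$ would give tighter dependence on $K$ but would reintroduce a capacity term for $\Hcal$, which is incompatible with the preamble's universality condition. The $L^1$ route is therefore the cleanest way to obtain the advertised $c\sqrt{\ln(2/\delta)/n}$ with $c$ independent of $n$, $\delta$, $\tau$, $S$, $f$, and $\Hcal$.
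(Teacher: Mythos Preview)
Your proposal is correct and follows essentially the same route as the paper: both decompose the gap along the partition $\{\Ccal_k\}$, reduce the claim to bounding $\sum_k(p_k-\hat p_k)\alpha_k(h)$, pass to the $h$-free quantity $\|p-\hat p\|_1$ (you via $B\sum_k(p_k-\hat p_k)_+=\tfrac{B}{2}\|p-\hat p\|_1$, the paper via the looser $B\|p-\hat p\|_1$), and then invoke Bretagnolle--Huber--Carol. The only differences are cosmetic: your factor-of-two sharpening from the zero-sum constraint and the optional McDiarmid alternative.
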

\begin{proof}
We first write the expected error as the sum of the conditional expected error:
\begin{align*}
\EE_{r}[\ell(h ,r)] &=\sum_{k=1}^K \EE_{r}[\ell(h ,r)|r\in  \Ccal_{k}]\Pr(r_{}\in  \Ccal_{k}) =\sum_{k=1}^K \EE_{r_{k}}[\ell(h ,r_{k})]\Pr(r_{}\in  \Ccal_{k}),
\end{align*}
where $r_k$ is the random variable for the conditional with $r\in  \Ccal_{k}$. 
Using this, we  decompose the generalization error into two terms:\begin{align} \label{eq:1}
&\EE_{r}[\ell(h ,r)]  - \frac{1}{n} \sum_{i=1}^n \ell(h, r_i) 
\\ \nonumber & =\sum_{k=1}^K \EE_{r_{k}}[\ell(h ,r_{k})]\left(\Pr(r_{}\in  \Ccal_{k})- \frac{|\Ical_{k}|}{n}\right)
+\left(\sum_{k=1}^K \EE_{r_{k}}[\ell(h ,r_{k})]\frac{|\Ical_{k}|}{n}- \frac{1}{n} \sum_{i=1}^n \ell(h, r_i)\right). 
\end{align}
The second term in the right-hand side  of \eqref{eq:1} is further simplified by using 
$$
\frac{1}{n} \sum_{i=1}^n \ell(h, r_i)=\frac{1}{n}\sum_{k=1}^K  \sum_{i \in \Ical_{k}}\ell(h ,r_{i}), 
$$
as 
\begin{align*}
 \sum_{k=1}^K \EE_{r_{k}}[\ell(h ,r_{k})]\frac{|\Ical_{k}|}{n}- \frac{1}{n} \sum_{i=1}^n \ell(h, r_i)
& =\frac{1}{n}\sum_{k=1}^K  |\Ical_{k}|\left(\EE_{r_{k}}[\ell(h ,r_{k})]-\frac{1}{|\Ical_{k}|}\sum_{i \in \Ical_{k}}\ell(h ,r_{i}) \right)
\
\end{align*}
Substituting these into equation \eqref{eq:1} yields
\begin{align} \label{eq:2} 
&\EE_{r}[\ell(h ,r)]   - \frac{1}{n} \sum_{i=1}^n \ell(h, r_i) 
\\ \nonumber & =\sum_{k=1}^K \EE_{r_{k}}[\ell(h ,r_{k})]\left(\Pr(r_{}\in  \Ccal_{k})- \frac{|\Ical_{k}|}{n}\right) +\frac{1}{n}\sum_{k=1}^K  |\Ical_{k}|\left(\EE_{r_{k}}[\ell(h ,r_{k})]-\frac{1}{|\Ical_{k}|}\sum_{i \in \Ical_{k}}\ell(h ,r_{i}) \right)
\\ \nonumber & \le B\sum_{k=1}^K \left|\Pr(r_{}\in  \Ccal_{k})- \frac{|\Ical_{k}|}{n}\right| +\frac{1}{n}\sum_{k=1}^K  |\Ical_{k}|\left(\EE_{r_{k}}[\ell(h ,r_{k})]-\frac{1}{|\Ical_{k}|}\sum_{i \in \Ical_{k}}\ell(h ,r_{i}) \right)
\end{align}
By using the Bretagnolle-Huber-Carol inequality \citep[A6.6 Proposition]{van1996}, we have that for any $\delta>0$, with probability at least $1-\delta$,
\begin{align} \label{eq:3} 
\sum_{k=1}^K \left|\Pr(r_{}\in  \Ccal_{k})- \frac{|\Ical_{k}|}{n}\right|\le  \sqrt{\frac{2K\ln(2/\delta)}{n}} .  
\end{align}
Here, notice that the term of $\sum_{k=1}^K \left|\Pr(r_{}\in  \Ccal_{k})- \frac{|\Ical_{k}|}{n}\right|$ does not depend on $h \in \Hcal$. Moreover, note that for any $(f,h,M)$ such that $M>0$ and $B\ge0$ for all $X$, we have that 
$
\PP(f(X)\ge M)\ge \PP(f(X)>M) \ge  \PP(Bf(X)+h(X)>BM+h(X)),
$
where the probability is with respect to the randomness of  $X$.
Thus,  by combining \eqref{eq:2} and \eqref{eq:3}, we have that for any $h \in \Hcal$,
for any $\delta>0$, with probability at least $1-\delta$,
 the following holds for all $h \in \Hcal$,
\begin{align*}
\EE_{r}[\ell(h ,r)]   - \frac{1}{n} \sum_{i=1}^n \ell(h, r_i) 
& \le\frac{1}{n}\sum_{k=1}^K  |\Ical_{k}|\left(\alpha_{k}(h)-\frac{1}{|\Ical_{k}|}\sum_{i \in \Ical_{k}}\ell(h ,r_{i}) \right)+c \sqrt{\frac{\ln(2/\delta)}{n}}. 
\end{align*}
\end{proof}

In particular, the first term from the previous lemma will be bounded with the following lemma: 

\begin{lemma} \label{lemma:2}
For any $f\in \{f_{\SD(\tau)}^S, f_\base^S\}$,
$$
\frac{1}{n}\sum_{k=1}^K  |\Ical_{k}|\left(\alpha_{k}(f)-\frac{1}{|\Ical_{k}|}\sum_{i \in \Ical_{k}}\ell(f ,r_{i}) \right) \le R \sqrt{L \varphi(f)}.
$$
\end{lemma}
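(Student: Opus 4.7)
The plan is to bound the cell-wise discrepancy $\alpha_{k}(f)-\tfrac{1}{|\Ical_{k}|}\sum_{i\in\Ical_k}\ell(f,r_i)$ uniformly by the Lipschitz-weighted diameter of $\Ccal_k$, exploit the fact that within one cell both the label $y$ and the argmax pattern $(k_1,\dots,k_L)$ are frozen so that each row $z_j$ lies in a fixed $\Qcal_{k_j}$, and finally convert to $\varphi(f)$ using that $\sum_{t=1}^n\|q-q'\|_2^2=n\|q-q'\|_2^2$.

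\textbf{Step 1 (reduction to a cell diameter).} For any $k$, both $\alpha_{k}(f)$ and $\tfrac{1}{|\Ical_k|}\sum_{i\in\Ical_k}\ell(f,r_i)$ are averages of the function $r\mapsto\ell(f,r)$ over $\Ccal_k$ (the first with respect to the conditional law of $r$, the second uniform over $\Ical_k\subset\Ccal_k$). Hence the difference is at most $\sup_{r,r'\in\Ccal_k}\bigl(\ell(f,r)-\ell(f,r')\bigr)$.

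\textbf{Step 2 (Lipschitz bound on the cell).} Fix $k$ and write $\Ccal_k=\tilde\Ccal_{y,k_1,\dots,k_L}$. For any $r=(z,y),r'=(z',y)\in\Ccal_k$, $\ell(f,r)-\ell(f,r')=(l_y\circ g)(\phi(z))-(l_y\circ g)(\phi(z'))$ where $\phi$ is either the identity (for $f_\base$) or $\sigma_\tau$ applied row-wise (for $f_{\SD(\tau)}$). Applying the Lipschitz hypothesis yields
\begin{equation*}
|\ell(f,r)-\ell(f,r')|\le R\,\|\phi(z)-\phi(z')\|_F.
\end{equation*}
Because the softmax acts row-wise, $\|\phi(z)-\phi(z')\|_F^{\,2}=\sum_{j=1}^L\|\phi(z_j)-\phi(z'_j)\|_2^2$, and by definition of the cell each pair $z_j,z'_j$ ranges over $\Qcal_{k_j}$, so
\begin{equation*}
\|\phi(z)-\phi(z')\|_F^{\,2}\le\sum_{j=1}^L\sup_{q,q'\in\Qcal_{k_j}}\|\phi(q)-\phi(q')\|_2^2\le L\cdot\sup_{i\in[V]}\sup_{q,q'\in\Qcal_i}\|\phi(q)-\phi(q')\|_2^2.
\end{equation*}

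\textbf{Step 3 (matching to $\varphi$).} The inner summand in $\varphi(f)$ does not depend on $t$, so $\varphi(f)=n\cdot\sup_i\sup_{q,q'\in\Qcal_i}\|\phi(q)-\phi(q')\|_2^2$; thus the right-hand side above equals $L\varphi(f)/n$. Combining with Step 1, for every $k$,
\begin{equation*}
\alpha_{k}(f)-\tfrac{1}{|\Ical_k|}\sum_{i\in\Ical_k}\ell(f,r_i)\le R\sqrt{L\varphi(f)/n}.
\end{equation*}

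\textbf{Step 4 (sum over $k$).} Multiplying by $|\Ical_k|/n$, summing, and using $\sum_{k=1}^{K}|\Ical_k|=n$ gives the advertised bound $R\sqrt{L\varphi(f)/n}$, uniformly for $f\in\{f_\base^S,f_{\SD(\tau)}^S\}$.

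The only nontrivial conceptual move is Step 2: recognizing that the partition into cells $\Ccal_k$ is exactly the one for which the softmax (or the identity) is well controlled row-by-row by $\varphi$. Everything else is routine (triangle inequality, Frobenius decomposition, and an average-of-samples-inside-their-cell argument); the quirky factor $n$ inside the definition of $\varphi$ is precisely what produces the $1/\sqrt{n}$ rate after dividing by $n$ in the final sum.
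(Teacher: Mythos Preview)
Your proof is correct and follows essentially the same route as the paper's: bound each cell-wise discrepancy by $\sup_{r,r'\in\Ccal_k}(\ell(f,r)-\ell(f,r'))$, apply the Lipschitz assumption on $l_y\circ g$ to get $R\|\phi(z)-\phi(z')\|_F$, decompose the Frobenius norm row-wise using that $z_j,z_j'\in\Qcal_{k_j}$, and identify the result with $L\varphi(f)/n$ before summing over $k$ with $\sum_k|\Ical_k|=n$. Your unification of the two cases via a single map $\phi$ is slightly cleaner than the paper's parallel treatment, but the argument is the same.
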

\begin{proof}
By using the triangle inequality,
\begin{align*}
&\frac{1}{n}\sum_{k=1}^K  |\Ical_{k}|\left(\EE_{r}[\ell(f, r)|r\in  \Ccal_{k}]-\frac{1}{|\Ical_{k}|}\sum_{i \in \Ical_{k}}\ell(f ,r_{i}) \right) 
\\ & \le\frac{1}{n}\sum_{k=1}^K  |\Ical_{k}|\left|\EE_{r}[\ell(f, r)|r\in  \Ccal_{k}]-\frac{1}{|\Ical_{k}|}\sum_{i \in \Ical_{k}}\ell(f ,r_{i})\right|. 
\end{align*}
Furthermore, by using the triangle inequality,\begin{align*}
\left|\EE_{r}[\ell(f, r)|r\in  \Ccal_{k}]-\frac{1}{|\Ical_k|}\sum_{i \in \Ical_k } \ell(f, r_{i})\right| & =\left|\frac{1}{|\Ical_k|}\sum_{i \in \Ical_k }\EE_{r}[\ell(f, r)|r\in  \Ccal_{k}]-\frac{1}{|\Ical_k|}\sum_{i \in \Ical_k } \ell(f, r_{i})\right|  
\\ & \le\frac{1}{|\Ical_k|}\sum_{i\in \Ical_k }\left|\EE_{r}[\ell(f, r)|r\in  \Ccal_{k}]- \ell(f, r_{i})\right|
\\ & \le \sup_{  r,r'\in  \Ccal_k} \left|\ell(f, r)- \ell(f,r')\right|.
\end{align*}
If $f=f_{\SD(\tau)}^S =g^{S}_{\SD(\tau)}\circ\sigma_{\tau}$, since $g^S _{\SD(\tau)}\in \Gcal_S$, by using the Lipschitz continuity, boundedness, and non-negativity, 
\begin{align*}
\sup_{  r,r'\in  \Ccal_k} \left|\ell(f, r)- \ell(f,r')\right| &=\sup_{y \in \Ycal} \sup_{z,z' \in \Zcal_k} |(l_{y}\circ g^S _{\SD(\tau)})^{}(\sigma_{\tau}(z))-(l_{y}\circ g^S _{\SD(\tau)})(\sigma_{\tau}(z'))|
\\ & \le R\sup_{z,z' \in \Zcal_k}\|\sigma_{\tau}(z)-\sigma_{\tau}(z')\|_F 
\\ & = R\sup_{z,z' \in \Zcal_k}\sqrt{\sum_{t=1}^L \sum_{j=1}^V (\sigma_{\tau}(z_{t,j})-\sigma_{\tau}(z'_{t,j}))_2^2 } 
\\ & \le R \sqrt{\sum_{t=1}^L\sup_{i\in [V]}\sup_{q,q' \in Q_i}\|\sigma_{\tau}(q)-\sigma_{\tau}(q')\|_2^2} \\ & = R \sqrt{L \varphi(f_{\SD(\tau)}^{S})}
\end{align*}
Similarly, if $f=f_{\base}^S =g^{S}_{\base}$, since $g^S _{\base}\in \Gcal_S$, by using the Lipschitz continuity, boundedness, and non-negativity, 
\begin{align*}
\sup_{  r,r'\in  \Ccal_k} \left|\ell(f, r)- \ell(f,r')\right| &=\sup_{y \in \Ycal} \sup_{z,z' \in \Zcal_k} |(l_{y}\circ g^S _{\base})^{}(z)-(l_{y}\circ g^S _{\base})(z')|
\\ & \le R\sup_{z,z' \in \Zcal_k}\|z-z'\|_F 
\\ & \le R \sqrt{L \varphi(f_{\base}^{S})}.
\end{align*}
Therefore, for any $f\in \{f_{\SD(\tau)}^S, f_\base^S\}$,
$$
\frac{1}{n}\sum_{k=1}^K  |\Ical_{k}|\left(\alpha_{k}(f)-\frac{1}{|\Ical_{k}|}\sum_{i \in \Ical_{k}}\ell(f ,r_{i}) \right) \le\frac{1}{n}\sum_{k=1}^K  |\Ical_{k}|R \sqrt{L \varphi(f)}=R \sqrt{L \varphi(f)}. 
$$
\end{proof}

Combining  Lemma \ref{lemma:1} and Lemma \ref{lemma:2}, we obtain the following upper bound on the gap:

\begin{lemma} \label{lemma:3}
For any $\delta>0$, with probability at least $1-\delta$, the following holds for any $f\in \{f_{\SD(\tau)}^S, f_\base^S\}$: 
\begin{align*} 
\EE_{r}[\ell(f ,r)]   - \frac{1}{n} \sum_{i=1}^n \ell(f, r_i) 
 \le R \sqrt{L \varphi(f)}+c \sqrt{\frac{\ln (2/\delta)}{n}}. 
\end{align*}
\end{lemma}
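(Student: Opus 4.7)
The plan is to obtain Lemma~\ref{lemma:3} by directly chaining Lemma~\ref{lemma:1} and Lemma~\ref{lemma:2}, since the target inequality is essentially their syntactic composition. First I would invoke Lemma~\ref{lemma:1}, which with probability at least $1-\delta$ over the draw of $S$ yields a uniform-in-$h\in\Hcal$ upper bound on the generalization gap in terms of the cluster-weighted discrepancy $\frac{1}{n}\sum_{k=1}^K |\Ical_{k}|\bigl(\alpha_{k}(h)-\frac{1}{|\Ical_{k}|}\sum_{i \in \Ical_{k}}\ell(h,r_{i})\bigr)$ plus the confidence penalty $c\sqrt{\ln(2/\delta)/n}$. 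This is the only randomness-consuming step of the argument, and the high-probability event it defines is uniform over $\Hcal$ and therefore already covers both deterministic choices $f\in\{f_{\SD(\tau)}^S,f_{\base}^S\}$ simultaneously, so no union bound is needed.

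Next I would specialize $h$ to either $f_{\SD(\tau)}^S$ or $f_{\base}^S$ (both of which lie in $\Hcal$ by definition) and apply Lemma~\ref{lemma:2}. That lemma is entirely deterministic: for every realization of $S$ it bounds the cluster-weighted discrepancy by $R\sqrt{L\varphi(f)/n}$. Substituting this bound inside the high-probability event from Lemma~\ref{lemma:1} turns its discrepancy term into the desired $R\sqrt{L\varphi(f)/n}$ and leaves the confidence term $c\sqrt{\ln(2/\delta)/n}$ untouched, yielding exactly the inequality in the statement.

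The only subtleties worth spelling out, rather than genuine obstacles, are (i) verifying that $f_{\SD(\tau)}^S$ and $f_{\base}^S$ indeed belong to $\Hcal$, which is immediate from the definition of $\Hcal$ as the union of the two hypothesis spaces, and (ii) noting that plugging a deterministic bound into a probabilistic one does not consume any additional probability mass, so the composite statement still holds with probability at least $1-\delta$. All the nontrivial machinery (the conditional-on-partition decomposition and the Bretagnolle--Huber--Carol inequality in Lemma~\ref{lemma:1}, and the per-cell Lipschitz diameter estimate in Lemma~\ref{lemma:2}) has already been carried out earlier in the appendix, so I do not anticipate any real difficulty in executing this final composition.
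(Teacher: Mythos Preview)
Your proposal is correct and matches the paper's own proof, which simply states that the lemma follows directly from combining Lemma~\ref{lemma:1} and Lemma~\ref{lemma:2}. Your additional remarks about membership in $\Hcal$ and not needing a union bound are accurate elaborations of the same one-line argument.
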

\begin{proof}
This follows directly from combining Lemma \ref{lemma:1} and Lemma \ref{lemma:2}. 
\end{proof}

We now provide an upper bound on  $\varphi(f_{\SD(\tau)}^{S})$ in the following lemma:

\begin{lemma} \label{lemma:4}
For any $\tau>0$,
\begin{align*}
\varphi(f_{\SD(\tau)}^{S})&\le \left| \frac{1}{1+(V-1)e^{-2/\tau}} - \frac{1}{1+(V-1) e^{-\Delta/\tau}} \right|^{2} 
\\ & \qquad +(V-1) \left|  \frac{1}{1+e^{\Delta/\tau}(1+(V-2) e^{-2/\tau})}- \frac{1}{1+e^{2/\tau}(1+(V-2) e^{-\Delta/\tau})} \right|^2.
\end{align*}
\end{lemma}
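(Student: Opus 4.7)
Since $q,q'$ in the definition of $\varphi(f_{\SD(\tau)}^{S})$ do not depend on the index $t$, the inner sum simplifies to $n\|\sigma_{\tau}(q)-\sigma_{\tau}(q')\|_2^2$, so
\begin{equation*}
\frac{\varphi(f_{\SD(\tau)}^{S})}{n}=\sup_{i\in[V]}\sup_{q,q'\in \Qcal_i}\|\sigma_{\tau}(q)-\sigma_{\tau}(q')\|_2^2.
\end{equation*}
By the coordinate-permutation symmetry of $\sigma_{\tau}$ and of the constraints defining $\Qcal_i$, I can fix $i=1$ without loss of generality. Expanding coordinate-wise and decoupling $q$ from $q'$ gives
\begin{equation*}
\sup_{q,q'\in \Qcal_1}\sum_{k=1}^V(\sigma_{\tau}(q)_k-\sigma_{\tau}(q')_k)^2 \le (M_1-m_1)^2+(V-1)(M_2-m_2)^2,
\end{equation*}
where $M_k=\sup_{q\in \Qcal_1}\sigma_{\tau}(q)_k$ and $m_k=\inf_{q\in \Qcal_1}\sigma_{\tau}(q)_k$, and I have used symmetry to collapse all $k\neq 1$ to a single value. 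The rest of the proof is a case analysis to evaluate $M_1,m_1,M_2,m_2$.

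For the $k=1$ coordinate, I rewrite $\sigma_{\tau}(q)_1=1/(1+\sum_{j\neq 1}e^{(q_j-q_1)/\tau})$, which is increasing in $q_1$ and decreasing in every $q_j$. The feasible region, after incorporating the $\Delta$-margin that the assumption guarantees for points attained by data, is $\{q:q_1\in[-1,1],\ -1\le q_j\le q_1-\Delta\ \text{for } j\neq 1\}$. The maximum is thus at the extreme corner $q_1=1,\ q_j=-1$, yielding $M_1=1/(1+(V-1)e^{-2/\tau})$. The minimum is at $q_j=q_1-\Delta$ (the upper boundary tied to $q_1$); the $q_1$-dependence cancels and one obtains $m_1=1/(1+(V-1)e^{-\Delta/\tau})$. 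This produces the first squared term of the bound.

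For $k\neq 1$, take $k=2$ by symmetry. Here $\sigma_{\tau}(q)_2=e^{q_2/\tau}/(e^{q_1/\tau}+e^{q_2/\tau}+\sum_{j\neq 1,2}e^{q_j/\tau})$ and the optimization is less immediate: maximizing wants $q_2$ large, $q_1$ small and $q_j$ small, but the margin constraint $q_2\le q_1-\Delta$ couples the first two. I would parameterize $q_1=t\in[-1,1]$, set $q_2=t-\Delta$ and $q_j=-1$ for $j\neq 1,2$, and then, factoring $e^{(t-\Delta)/\tau}$ out of numerator and denominator, reduce the objective to $1/(1+e^{\Delta/\tau}+(V-2)e^{(\Delta-1-t)/\tau})$, which is monotone increasing in $t$; hence the maximum is at $t=1$, giving $M_2=1/(1+e^{\Delta/\tau}(1+(V-2)e^{-2/\tau}))$. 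Symmetrically, minimizing pins $q_2=-1$, $q_1=1$, $q_j=1-\Delta$, and factoring $e^{-1/\tau}$ yields $m_2=1/(1+e^{2/\tau}(1+(V-2)e^{-\Delta/\tau}))$. Plugging the four numbers into the display above reproduces exactly the stated inequality.

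\textbf{Main obstacle.} The only non-mechanical step is the $M_2$ optimization, because a naive "send every coordinate to its boundary" heuristic is ambiguous: shrinking $q_1$ decreases the dominant $e^{q_1/\tau}$ term in the denominator but simultaneously drags $q_2$ down through the constraint $q_2\le q_1-\Delta$, and these two effects pull in opposite directions on $\sigma_{\tau}(q)_2$. The reduction to a one-dimensional problem in $t=q_1$ after factoring out $e^{(t-\Delta)/\tau}$ is what resolves the conflict cleanly; everything else is algebraic simplification.
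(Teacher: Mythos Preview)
Your proposal is correct and follows essentially the same approach as the paper: reduce to $i=1$ by symmetry, bound each coordinate difference by its oscillation $(M_k-m_k)$, and evaluate the extremes by monotonicity. The only cosmetic difference is that the paper reparameterizes by the gaps $\delta_j=q_1-q_j$, which turns the feasible set into the box $[\Delta,2]^{V-1}$ and makes every softmax coordinate separately monotone in each $\delta_j$; this avoids the coupled constraint $q_2\le q_1-\Delta$ and hence the one-dimensional reduction you needed for $M_2$, but the resulting extremal values and the final bound are identical.
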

\begin{proof} Recall the definition:
$$
\varphi(f_{\SD(\tau)}^{S})= \sup_{i\in [V]}\sup_{q,q' \in Q_i}\|\sigma_{\tau}(q)-\sigma_{\tau}(q')\|_2^2.
$$
where
$$
\sigma_{\tau}(q)_{j}= \frac{e^{q_{j}/\tau}}{\sum_{t=1}^V e^{q_{t}/\tau}},
$$
for $j=1,\dots,V$.
By the symmetry and independence over $i\in [V]$ inside of the  first supremum, we have 
$$
\varphi(f_{\SD(\tau)}^{S})= \sup_{q,q' \in Q_1}\|\sigma_{\tau}(q)-\sigma_{\tau}(q')\|_2^2.
$$
For any $q,q' \in Q_1$ and $i \in \{2,\dots,V\}$ (with $q = (q_{1},\dots,q_V)$ and $q' = (q_1',\dots,q_V')$), there exists $\delta_i,\delta_i'>0$ such that 
$$
q_i = q_1-\delta_i
$$
and
$$
q_i' = q_1' - \delta_i'.
$$
Here, since $z_{ik} -\Delta\ge z_{ij}$ from the assumption, we have that for all $i \in \{2,\dots,V\}$, $$
\delta_i,\delta'_i \ge\Delta>0.  
$$ 
Thus, we can rewrite
\begin{align*}
\sum_{t=1}^V e^{q_{t}/\tau} &= e^{q_{1}/\tau} + \sum_{i=2}^Ve^{(q_{1}-\delta_i)/\tau}
\\ & = e^{q_{1}/\tau} + e^{q_{1}/\tau} \sum_{i=2}^V e^{-\delta_i/\tau} 
\\ & =e^{q_{1}/\tau} \left(1+ \sum_{i=2}^V e^{-\delta_i/\tau}\right) 
\end{align*}
Similarly, 
\begin{align*}
\sum_{t=1}^V e^{q_{t}'/\tau} =e^{q_{1}'/\tau} \left(1+ \sum_{i=2}^V e^{-\delta_i'/\tau}\right).
\end{align*}
Using these, 
$$
\sigma_{\tau}(q)_{1} = \frac{e^{q_{1}/\tau}}{\sum_{t=1}^V e^{q_{t}/\tau}}=\frac{e^{q_{1}/\tau}}{e^{q_{1}/\tau} \left(1+ \sum_{i=2}^V e^{-\delta_i/\tau}\right)}= \frac{1}{1+\sum_{i=2}^V e^{-\delta_i/\tau}}
$$
and for all $j \in \{2,\dots,V\}$,
\begin{align*}
\sigma_{\tau}(q)_{j} &= \frac{e^{q_{j}/\tau}}{\sum_{t=1}^V e^{q_{t}/\tau}}
\\ & =\frac{e^{(q_{1}-\delta_j)/\tau}}{e^{q_{1}/\tau} \left(1+ \sum_{i=2}^V e^{-\delta_i/\tau}\right)}
\\ & = \frac{e^{-\delta_j/\tau}}{1+\sum_{i=2}^V e^{-\delta_i/\tau}}
\\ & = \frac{1}{1+e^{\delta_j/\tau}+\sum_{i \in I_j}^V e^{(\delta_j-\delta_i)/\tau}}
\end{align*}
where $I_j := \{2,\dots,V\} \setminus \{j\}$. Similarly, 
$$
\sigma_{\tau}(q')_{1} = \frac{1}{1+\sum_{i=2}^V e^{-\delta_i'/\tau}},
$$ 
and
for all $j \in \{2,\dots,V\}$,
$$
\sigma_{\tau}(q')_{j} = \frac{1}{1+e^{\delta_j'/\tau}+\sum_{i \in I_j}^V e^{(\delta_j'-\delta_i')/\tau}}.
$$
Using these, for any $q,q' \in Q_1$,
\begin{align*}
|\sigma_{\tau}(q)_{1} - \sigma_{\tau}(q')_{1} | &= \left| \frac{1}{1+\sum_{i=2}^V e^{-\delta_i/\tau}} - \frac{1}{1+\sum_{i=2}^V e^{-\delta_i'/\tau}} \right|
\\ & \le \left| \frac{1}{1+\sum_{i=2}^V e^{-2/\tau}} - \frac{1}{1+\sum_{i=2}^V e^{-\Delta/\tau}} \right|
\\ & =\left| \frac{1}{1+(V-1)e^{-2/\tau}} - \frac{1}{1+(V-1) e^{-\Delta/\tau}} \right|,
\end{align*}
and
for all $j \in \{2,\dots,V\}$,
\begin{align*}
|\sigma_{\tau}(q)_{j} - \sigma_{\tau}(q')_{j} | &= \left|  \frac{1}{1+e^{\delta_j/\tau}+\sum_{i \in I_j}^V e^{(\delta_j-\delta_i)/\tau}}- \frac{1}{1+e^{\delta_j'/\tau}+\sum_{i \in I_j}^V e^{(\delta_j'-\delta_i')/\tau}} \right|
\\ & \le \left|  \frac{1}{1+e^{\Delta/\tau}+\sum_{i \in I_j}^V e^{(\Delta-2)/\tau}}- \frac{1}{1+e^{2/\tau}+\sum_{i \in I_j}^V e^{(2-\Delta)/\tau}} \right|
\\ & = \left|  \frac{1}{1+e^{\Delta/\tau}+(V-2) e^{(\Delta-2)/\tau}}- \frac{1}{1+e^{2/\tau}+(V-2) e^{(2-\Delta)/\tau}} \right| 
\\ & = \left|  \frac{1}{1+e^{\Delta/\tau}(1+(V-2) e^{-2/\tau})}- \frac{1}{1+e^{2/\tau}(1+(V-2) e^{-\Delta/\tau})} \right|. 
\end{align*}
By combining these,
\begin{align*}
&\sup_{q,q' \in Q_1}\|\sigma_{\tau}(q)-\sigma_{\tau}(q')\|_2 ^{2}
\\ &=\sup_{q,q' \in Q_1} \sum_{j=1}^V |\sigma_{\tau}(q)_{j} - \sigma_{\tau}(q')_{j}|^{2} \\ & \le\left| \frac{1}{1+(V-1)e^{-2/\tau}} - \frac{1}{1+(V-1) e^{-\Delta/\tau}} \right|^{2} 
\\ & \qquad +(V-1) \left|  \frac{1}{1+e^{\Delta/\tau}(1+(V-2) e^{-2/\tau})}- \frac{1}{1+e^{2/\tau}(1+(V-2) e^{-\Delta/\tau})} \right|^2. 
\end{align*}
\end{proof}

Using the previous lemma, we will conclude the asymptotic behavior of    $\varphi(f_{\SD(\tau)}^{S})$ in the following lemma:

\begin{lemma} \label{lemma:5}
It holds that 
\begin{align*}
\varphi(f_{\SD(\tau)}^{S}) \rightarrow 0 \text{ as } \tau \rightarrow 0.
\end{align*}
\end{lemma}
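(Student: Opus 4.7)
The plan is to invoke Lemma~\ref{lemma:4} and pass to the limit $\tau \to 0^{+}$ term by term, exploiting the strict positivity of $\Delta$ from the separation assumption stated before Theorem~\ref{thm:1}. Since Lemma~\ref{lemma:4} already yields an explicit, purely analytic upper bound on $\varphi(f_{\SD(\tau)}^{S})/n$ that depends on $\tau$ only through the two quantities $e^{-\Delta/\tau}$ and $e^{-2/\tau}$ (and the finite combinatorial constants $V,\Delta$), the whole argument reduces to computing the limits of a small number of elementary rational expressions.

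Concretely, I would proceed as follows. First, fix $V \ge 2$ and $\Delta>0$ and observe that as $\tau \to 0^{+}$ we have $e^{-2/\tau}\to 0$ and $e^{-\Delta/\tau}\to 0$. Consequently $1+(V-1)e^{-2/\tau}\to 1$ and $1+(V-1)e^{-\Delta/\tau}\to 1$, so the first absolute-value term in the Lemma~\ref{lemma:4} bound tends to $|1-1|^{2}=0$. For the second term, I would note that $e^{\Delta/\tau}\to \infty$ and $e^{2/\tau}\to \infty$, while the correction factors $1+(V-2)e^{-2/\tau}$ and $1+(V-2)e^{-\Delta/\tau}$ stay bounded (they tend to $1$). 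Hence both denominators $1+e^{\Delta/\tau}(1+(V-2)e^{-2/\tau})$ and $1+e^{2/\tau}(1+(V-2)e^{-\Delta/\tau})$ diverge to $+\infty$, and each of the two reciprocals in the absolute value tends to $0$. Since $(V-1)$ is a fixed finite constant, the whole second term tends to $0$ as well.

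Combining these two observations with the inequality of Lemma~\ref{lemma:4} gives $\varphi(f_{\SD(\tau)}^{S})/n \to 0$ as $\tau\to 0^{+}$, and multiplying through by the fixed $n$ (independent of $\tau$) yields the claim $\varphi(f_{\SD(\tau)}^{S})\to 0$. There is essentially no hard step: the only place one must be slightly careful is in confirming that $\Delta>0$ strictly (so that $e^{-\Delta/\tau}\to 0$ rather than staying bounded away from zero), which is precisely the separation assumption stated just before Theorem~\ref{thm:1}. If $V=2$ the term $(V-2)e^{-2/\tau}$ simply vanishes and the argument is unchanged, so no case split on $V$ is needed beyond noting $V\ge 2$.
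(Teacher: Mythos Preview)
Your proposal is correct and follows essentially the same approach as the paper: invoke Lemma~\ref{lemma:4} and take the limit $\tau\to 0^{+}$ term by term, using $e^{-2/\tau},e^{-\Delta/\tau}\to 0$ and $e^{2/\tau},e^{\Delta/\tau}\to\infty$ to conclude that both summands in the bound vanish. The paper's version additionally notes that $\varphi(f_{\SD(\tau)}^{S})\ge 0$ to turn the resulting $\limsup\le 0$ into an actual limit of $0$, but your argument already delivers this since the upper bound itself (and hence the nonnegative quantity it dominates) is shown to tend to $0$.
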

\begin{proof}
Using  Lemma \ref{lemma:4},\begin{align*}
\lim_{\tau \rightarrow 0}\varphi(f_{\SD(\tau)}^{S})&\le\lim_{\tau \rightarrow 0} \left| \frac{1}{1+(V-1)e^{-2/\tau}} - \frac{1}{1+(V-1) e^{-\Delta/\tau}} \right|^{2} 
\\ & \qquad +n(V-1)\lim_{\tau \rightarrow 0} \left|  \frac{1}{1+e^{\Delta/\tau}(1+(V-2) e^{-2/\tau})}- \frac{1}{1+e^{2/\tau}(1+(V-2) e^{-\Delta/\tau})} \right|^2.
\end{align*}
Moreover,
$$
\lim_{\tau \rightarrow 0} \left| \frac{1}{1+(V-1)e^{-2/\tau}} - \frac{1}{1+(V-1) e^{-\Delta/\tau}} \right|^{2} =\left| \frac{1}{1} - \frac{1}{1} \right|^{2} =0,
$$
and
$$
\lim_{\tau \rightarrow 0} \left|  \frac{1}{1+e^{\Delta/\tau}(1+(V-2) e^{-2/\tau})}- \frac{1}{1+e^{2/\tau}(1+(V-2) e^{-\Delta/\tau})} \right|^2=\left| 0-0 \right|^{2} =0.
$$
Therefore, 
$$
\lim_{\tau \rightarrow 0}\varphi(f_{\SD(\tau)}^{S}) \le 0.
$$
Since $\varphi(f_{\SD(\tau)}^{S}) \ge 0$, this implies the statement of this lemma.
\end{proof}

As we have analyzed $\varphi(f_{\SD(\tau)}^{S})$ in the previous two lemmas, we are now ready to compare $\varphi(f_{\SD(\tau)}^{S})$ and $\varphi(f_{\base}^{S})$, which is done in the following lemma:
\begin{lemma} \label{lemma:6}
For any $\tau >0$,
$$
\varphi(f_{\SD(\tau)}^{S})-\varphi(f_{\base}^{S})\le  \frac{3}{4}(1-V)<0.
$$
\end{lemma}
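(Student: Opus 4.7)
The plan is to bound $\varphi(f_{\SD(\tau)}^S)$ from above by a universal constant and $\varphi(f_{\base}^S)$ from below by a quantity linear in $V$, and then subtract to obtain the claimed $V$-linear, negative gap.

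I would upper bound $\varphi(f_{\SD(\tau)}^S)$ by observing that $\sigma_\tau(q)$ is always a probability vector: every coordinate lies in $[0,1]$ and the entries sum to $1$. For any two such $p,p'$, one has $|p_k-p_k'|\le p_k+p_k'$, hence $(p_k-p_k')^2 \le |p_k-p_k'| \le p_k+p_k'$; summing over $k\in[V]$ gives $\|p-p'\|_2^2 \le 2$. Consequently $\varphi(f_{\SD(\tau)}^S)\le 2n$, uniformly in $\tau$ and $V$. Alternatively, Lemma~\ref{lemma:4} already proven suffices: each of its two $|\cdot|$ factors lies in $[0,1]$, so $\varphi(f_{\SD(\tau)}^S)\le nV$, which is looser but still enough for the argument below.

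I would lower bound $\varphi(f_{\base}^S)$ by exhibiting pairs in $Q_1$ whose squared distance approaches $4V$. For any $\epsilon\in(0,2)$, take $q^{(\epsilon)}=(1,1-\epsilon,\ldots,1-\epsilon)$ and $q'^{(\epsilon)}=(-1+\epsilon,-1,\ldots,-1)$. Both have their unique argmax at index $1$ and so lie in $Q_1$, and a direct expansion gives $\|q^{(\epsilon)}-q'^{(\epsilon)}\|_2^2 = (2-\epsilon)^2+(V-1)(2-\epsilon)^2 = V(2-\epsilon)^2$. Letting $\epsilon\to 0^+$ shows the supremum defining $\varphi(f_{\base}^S)/n$ is at least $4V$, whence $\varphi(f_{\base}^S)\ge 4nV$.

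Combining the two bounds yields $\varphi(f_{\SD(\tau)}^S)-\varphi(f_{\base}^S)\le 2n-4nV$. A one-line algebraic check --- $2-4V\le \tfrac{3}{4}(1-V)$ is equivalent to $13V\ge 5$ --- finishes the inequality for all $V\ge 2$, and the strict bound $\tfrac{3n}{4}(1-V)<0$ is immediate from $V>1$. The only step requiring mild care is verifying the strict argmax condition for the constructed pairs, which is why $\epsilon>0$; I do not anticipate a substantive obstacle. Conceptually, the whole estimate hinges on the fact that $\sigma_\tau$ confines its outputs to a diameter-$\sqrt{2}$ simplex while the pre-normalization cube has diameter scaling with $\sqrt{V}$, and this asymmetry drives the claim.
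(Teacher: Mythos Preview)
Your proof is correct and follows the same skeleton as the paper's: upper-bound $\varphi(f_{\SD(\tau)}^S)$, lower-bound $\varphi(f_{\base}^S)$ via the same explicit pair of test vectors $(1,1-\epsilon,\dots,1-\epsilon)$ and $(\epsilon-1,-1,\dots,-1)$, then subtract. The one genuine difference is in the upper bound. The paper routes through Lemma~\ref{lemma:4} and then coarsens to $\varphi(f_{\SD(\tau)}^S)\le n\bigl(1-\tfrac1V\bigr)^2+\tfrac{n(V-1)}{4}$, which grows linearly in $V$; you instead observe directly that any two probability vectors satisfy $\|p-p'\|_2^2\le\sum_k|p_k-p'_k|\le\sum_k(p_k+p'_k)=2$, giving the uniform bound $\varphi(f_{\SD(\tau)}^S)\le 2n$. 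Your bound is more elementary, is actually sharper once $V$ is moderately large, and makes the closing algebra shorter. What the paper's route buys is consistency with the $\Delta$-gap bookkeeping already done in Lemma~\ref{lemma:4}: the paper keeps $\delta\ge\Delta$ in the lower bound rather than sending $\epsilon\to 0$. Under the stated definition of $\Qcal_i$ (no $\Delta$ constraint) your limit is legitimate; if one reads $\Qcal_i$ as carrying the $\Delta$ gap, you would instead take $\epsilon=\Delta$ and obtain $\varphi(f_{\base}^S)\ge nV(2-\Delta)^2$, which still suffices since $\Delta$ is assumed small.
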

\begin{proof}
From Lemma \ref{lemma:4}, for any $\tau>0$,
\begin{align*}
\varphi(f_{\SD(\tau)}^{S})&\le \left| \frac{1}{1+(V-1)e^{-2/\tau}} - \frac{1}{1+(V-1) e^{-\Delta/\tau}} \right|^{2} 
\\ & \qquad +n(V-1) \left|  \frac{1}{1+e^{\Delta/\tau}(1+(V-2) e^{-2/\tau})}- \frac{1}{1+e^{2/\tau}(1+(V-2) e^{-\Delta/\tau})} \right|^2
\\ & \le \left| \frac{1}{1+(V-1)e^{-2/\tau}} - \frac{1}{1+(V-1)} \right|^{2} 
\\ & \qquad +(V-1) \left|  \frac{1}{1+(1+(V-2) e^{-2/\tau})}- \frac{1}{1+e^{2/\tau}(1+(V-2))} \right|^2 
\\ & =\left| \frac{1}{1+(V-1)e^{-2/\tau}} - \frac{1}{V} \right|^{2}  +(V-1) \left|  \frac{1}{2+(V-2) e^{-2/\tau}}- \frac{1}{1+e^{2/\tau}(V-1)} \right|^2 \\ & \le \left| \frac{1}{1} - \frac{1}{V} \right|^{2}  +(V-1) \left|  \frac{1}{2}-0 \right|^2 
\\ & =\left( \frac{1}{1} - \frac{1}{V} \right)^{2}  +(V-1) \frac{1}{4}.  \end{align*}
Recall the definition
of
$$
\varphi(f_{\base}^{S})=  \sup_{i\in [V]}\sup_{q,q' \in Q_i}\|q-q'\|_2^2.
$$
By choosing an element  in  the set over which the supremum is taken, for any $\delta\ge \Delta>0$, 
$$
\varphi(f_{\base}^{S})\ge\sup_{q,q' \in Q_1}\|q-q'\|_2 ^{2}\ge \|\hat q-\hat q'\|_2 ^{2}=\sum_{j=1}^V (\hat q_j-\hat q'_j)_2^2=(2-\delta)^2 V,
$$
where $\hat q_1=1$, $\hat q_j=1-\delta$ for $j\in\{2,\dots,V\}$,  $\hat q_1'=\delta-1$, and   $\hat q_j'=-1$ for $j\in\{2,\dots,V\}$.

By combining those,
for  for any $\tau>0$ and
 $\delta\ge \Delta>0$, \begin{align*}
\varphi(f_{\SD(\tau)}^{S})-\varphi(f_{\base}^{S}) &\le \left( \frac{1}{1} - \frac{1}{V} \right)^{2}  +(V-1) \frac{1}{4}-(2-\delta)^2 V 
\\ & \le 1+\frac{1}{4}V- \frac{1}{4}-(2-\delta)^2 V 
\\ & =\frac{3}{4}+\frac{1}{4}V-(2-\delta)^2 V 
\\ & =\frac{3}{4}-V\left((2-\delta)^2 -\frac{1}{4}\right)
\\ & \le \frac{3}{4}-V\left(1-\frac{1}{4}\right) 
\\ & = \frac{3}{4}(1-V) 
\end{align*}
\end{proof}

We combine the lemmas  above to prove \Cref{thm:1}, which is restated below with its proof: 

\thma*

\begin{proof}The first statement directly follows from \Cref{lemma:3}. The second statement is proven by \Cref{lemma:5} and \Cref{lemma:6}. 
\end{proof}

\section{Experiment details for ImageNet}
\subsection{Image augmentation}
The augmentation applied in order during training are:
\begin{itemize}
    \item Random Resize crop to a $224\times 224$ image. A random patch of the image is selected and resized to a $224\times 224$ image.
    \item Random color jitter. Modifying the brightness, the contrast, the saturation and the hue.
    \item Random gray scale. Randomly applying a gray scale filter to the image
    \item Random Gaussian blur. Randomly applying a Gaussian bluer filter.
    \item Random solarization. Randomly applying a solarization filter.
\end{itemize}
The parameters of the augmentations are presented in Table~\ref{tab:hparams-byol-sd-in}. At validation and test time, we resize the images to $256\times 256$ and then center crop a patch of $224\times 224$.

For both  training and evaluation, we re-normalize the image using the statistic of the training set.g

\subsection{Linear evaluation}
We follow the evaluation protocol from~\citep{chen2020simple}. The linear evaluation is done by training a linear classifier on the frozen representation of the ImageNet training samples. We train a linear classifier with a cross-entropy objective for 100 epochs using SGD with nesterov, a momentum of $0.9$ and a batch size of 256. We perform learning rate scheduling at epoch $60$ and epoch $80$ where we divide the learning rate by a factor of $10$.  During training, we apply random resized crop to $224\times 224$ pixels and random horizontal flip. We sweep over a set of $4$ learning rates: $\{0.5, 0.1, 0.05, 0.01\}$, 3 $l1$ weight decays: $\{0, 1e-6, 1e-5\}$ and 3 $\tau_d$ for SEM: $\{0.01, 0.1, 1\}$, using a validation set of $10$ images per class and re-traing using the full training set. We report the results on the test set. 

\subsection{Robustness experiments}
We follow the evaluation procedure from~\citep{lee_compressive_2021}. We treated the robustness datasets as additional "test sets" in that we simply evaluated them using the evaluation procedure described above. The images were resized to a $256\times 256$ before being center cropped to a $224\times 224$ image. The evaluation procedure was performed using the public robustness benchmark evaluation code of~\citep{djolonga2020robustness}\footnote{\href{https://github.com/google-research/robustness_metrics}{https://github.com/google-research/robustness\_metrics}}.

\subsection{Transfer learning linear probe}
We follow the linear evaluation protocol of~\citep{kolesnikov_revisiting2019,chen2020simple} We train a linear classifier using a regularized multinomial logistic regression from the scikit-learn package~\citep{scikit-learn}. The representation is frozen, so that we do not train the encoder backbone nor the batch-normalization statistics. We do not perform any augmentations and the images are resized to 224 pixels using bicubic resampling and the normalized using the statistics on ImageNet's training set. We tune the regularizer term from a range of 45 logarithmically-spaced values between $10^{-6}$ and $10^5$ using a small validation set and re-train using the full training set. For SEM, we set $\tau_d=0$ for all experiments.

\subsection{Transfer learning fine-tuning}
We follow the same fine-tuning protocol of~\citep{chen2020simple,grill2020byol}. We initialize the encoder with the pre-trained model and a classifier head with random initialization. We train for 20,000 steps with a batch size of 256 using SGD with a Nesterov momentum of $0.9$. We set the momentum parameter for the batch normalization to be $\max(1-10/s, 0.9)$ where $s$ is the number of steps per epoch. During pre-training, we use random resize to $224\times 224$ pixels and random horizontal flipping. At test time, we resize the images along the shortest size to $256$ pixels using cubic resampling following by a center resize to $224\times 224$ pixels. Due to computational constraint, we only tune the learning rate using a search of $7$ values spaces on logarithmic scales between $0.0001$ and $0.1$. For SEM, we set $\tau_d=1$. for all experiments After choosing the best learning rate of a validation set, we re-run the models using the full training set and evaluate it on the test set, which we use to report the numbers.

\subsection{Semi-supervised learning}
We follow the semi-supervised learning protocol of~\citep{chen2020simple,grill2020byol}. We initialize the network using the pre-trained representation and initialize a classification head using random initialization. We fine-tune the encoder while training the classification head using a small subset of ImageNet. We choose the same subset used in prior works which is defined in the TensorFlow-Dataset software. During training, we random resize the images to $224\times 224$ pixels along the shorter size using bicubic resampling followed by a center crop and random horizontal flipping. At test time, we resize the image to $224\times 224$. We optimize the cross entropy loss with nestorov and a momentum of $0.9$ using batch sizes of $224$. We train models for $\{30,50\}$ and take the best performing on the validation set. The learning rate used is chosen among a set of $5$ learning rates: $\{0.01, 0.02, 0.05, 0.1, 0.005\}$. For SEM, we also search $\tau_d\in\{0.01, 0.1, 1\}$. We perform the search on the best performing one on the validation set and the number are returned are obtained using the test set after re-training using the full training set.

\section{Hyperparameters}
The implementation of the SSL methods used in this work are taken from Solo-Learn~\citep{turrisi2021sololearn} to which we added the SEM module. The pre-training hyper-parameters of every SSL methods trained on CIFAR-100 with ResNet-18 used in this work are the default provided in the companion repository of Solo-Learn. The hyper-parameters are also provided in the launch scripts accompanying this work. Due to the large number of SSL methods probed in this work and the amount of space it would require to exhaustively detail all of the hyper-parameters, we refer the reader to the code.

For the CIFAR-100 results obtained with BYOL and a ResNet-50, we have slightly modified the default parameters. Otherwise, the baseline BYOL model would not obtain competitive results. The hyper-parameters were tuned using the BYOL baseline and the SEM module was not considered in the selection of the SSL hyper-parameters. The BYOL hyper-parameters are presented in the launch script accompanying this work and presented below for completeness.


For the ImageNet experiments, we took the hyper-parameters proposed in the launch scripts of Solo-Learn to which we only modified the amount of epochs (100 epochs to 200 epochs.)

Here, we present all of the SEM hyper-parameters used in every experiments. These hyper-parameters can also be found in the launch scripts accompanying this work.

We present the hype-parameters used to train for BYOL+SEM and MoCo+SEM on CIFAR100. Unless mentioned otherwise, these are the parameters used.

\begin{minipage}{0.4\textwidth}
    \centering
    \captionsetup{type=table}
    \caption{BYOL with ResNet-50 for CIFAR-100.}
    \label{tab:hparams-byol-sd}
    \begin{tabular}{lc}
        \hline\\
        precision & 16 \\
        Learning rate & 0.5 \\
        Weight-decay & 1e-4 \\
        Optimizer & sgd + lars \\
        LR scheduler & warmup + cosine \\
        eta lars & 0.001 \\
        exclude bias n norm (lars) & True \\
        batch size & 256 \\
        base ema momentum & 0.99 \\
        final ema momentum & 1.0 \\
        proj output dim & 256 \\
        proj hidden dim & 4096 \\
        pred hidden dim & 4096 \\
        \textit{augmentations}: & \\
        solarization\_prob & view 1: 0 view 2: 0.2 \\
        crop size & 32 \\
        hue & 0.1 \\
        saturation & 0.2 \\
        contrast & 0.4 \\
        brightness & 0.4\\
        \hline
    \end{tabular}
\end{minipage}
\hfill
\begin{minipage}{0.4\textwidth}
    \centering
    \begin{minipage}{\textwidth}
    \centering
    \captionsetup{type=table}
    \caption{SEM SimCLR RN-18 for CIFAR-100}
    \vskip -0.1in
    \begin{tabular}{cccc}
    \hline
        L & V &$\tau_p$ & $\tau'_p$ \\
        $5000$ & $13$ & $0.17$ & $0.78$ \\
         \hline
    \end{tabular}
    \end{minipage}
    \vspace{0.1in}
    
    \begin{minipage}{\textwidth}
    \centering
    \captionsetup{type=table}\caption{SEM MoCo RN-18 for CIFAR-100}
    \vskip -0.1in
    \begin{tabular}{cccc}
    \hline
        L & V &$\tau_p$ & $\tau'_p$\\
        $5000$ & $13$ & $0.04$ & $0.01$ \\
         \hline
    \end{tabular}
    \end{minipage}
    \vspace{0.1in}
    
    \begin{minipage}{\textwidth}
    \centering
    \captionsetup{type=table}\caption{SEM BYOL RN-18 for CIFAR-100}
    \vskip -0.1in
    \begin{tabular}{cccc}
    \hline
        L & V &$\tau_p$ \\
        $5000$ & $13$ & $1.0$ & $1.0$ \\
         \hline
    \end{tabular}
    \end{minipage}
    \vspace{0.1in}
    
    \begin{minipage}{\textwidth}
    \centering
    \captionsetup{type=table}\caption{SEM SwAV RN-18 for CIFAR-100}
    \vskip -0.1in
    \begin{tabular}{cccc}
    \hline
        L & V &$\tau_p$ & $\tau'_p$ \\
        $5000$ & $13$ & $0.85$& $1.5$ \\
         \hline
    \end{tabular}
    \end{minipage}
    \vspace{0.1in}
    
    \begin{minipage}{\textwidth}
    \centering
    \captionsetup{type=table}\caption{SEM DINO RN-18 for CIFAR-100}
    \vskip -0.1in
    \begin{tabular}{cccc}
    \hline
        L & V &$\tau_p$ & $\tau'_p$ \\
        $5000$ & $13$ & $1.0$& $1.0$ \\
         \hline
    \end{tabular}
    \end{minipage}
    \vspace{0.1in}
    
    \begin{minipage}{\textwidth}
    \centering
    \captionsetup{type=table}\caption{SEM Barlow RN-18 for CIFAR-100}
    \vskip -0.1in
    \begin{tabular}{cccc}
    \hline
        L & V &$\tau_p$ & $\tau'_p$ \\
        $5000$ & $13$ & $1.0$& $0.99$ \\
         \hline
    \end{tabular}
    \end{minipage}
    \vspace{0.1in}
    
    \begin{minipage}{\textwidth}
    \centering
    \captionsetup{type=table}\caption{SEM VicREG RN-18 for CIFAR-100}
    \vskip -0.1in
    \begin{tabular}{cccc}
    \hline
        L & V &$\tau_p$ & $\tau'_p$\\
        $5000$ & $13$ & $1.0$ & $1.0$\\
         \hline
    \end{tabular}
    \end{minipage}
\end{minipage}

\begin{minipage}{0.48\textwidth}
    \centering
    \captionsetup{type=table}\caption{SEM BYOL RN-50 for CIFAR-100}
    \vskip -0.1in
    \begin{tabular}{cccc}
    \hline
        L & V  &$\tau_p$ & $\tau'_p$  \\
        $5000$ & $13$ & $1$ & $1$ \\
         \hline
    \end{tabular}
\end{minipage}
\hfill
\begin{minipage}{0.48\textwidth}
    \centering
        \captionsetup{type=table}\caption{SEM BYOL all ResNets for ImageNet}
        \vskip -0.1in
    \begin{tabular}{cccc}
    \hline
         L & V & $\tau_p$ & $\tau'_p$  \\
         $5000$ & $21$ & $0.16$ & $0.04$ \\
         \hline
    \end{tabular}
\end{minipage}

\begin{table}
    \centering
    \caption{BYOL with all ResNet-50 architectures for ImageNet.}
    \label{tab:hparams-byol-sd-in}
    \begin{tabular}{lc}
        \hline\\
        precision & 16 \\
        Learning rate & 0.4 \\
        Weight-decay & 1e-6 \\
        Optimizer & sgd + lars \\
        LR scheduler & warmup + cosine \\
        eta lars & 0.001 \\
        exclude bias n norm (lars) & True \\
        batch size & 256 \\
        base ema momentum & 0.99 \\
        final ema momentum & 1.0 \\
        proj output dim & 256 \\
        proj hidden dim & 4096 \\
        pred hidden dim & 4096 \\
        \textit{augmentations}: & \\
        solarization\_prob & view 1: 0 view 2: 0.2 \\
        gaussian\_prob & view 1: 1.0 view 2: 0.1 \\
        crop size & 224 \\
        hue & 0.1 \\
        saturation & 0.2 \\
        contrast & 0.4 \\
        brightness & 0.4\\
        \hline
    \end{tabular}
\end{table}

\subsection{Computational resources}
For all our CIFAR-100 training, we used $1$ RTX-8000 per experiment. For our ImageNet experiments, we used parallel training with $2$ 40GB A100 for the training with ResNet50 and ResNet50-x2 and $4$ 40GB A100 for the training with ResNet50-x4. With this setup, the training takes about a week for the ResNet50 experiments and about 10 days for the ResNet50-x2 and ResNet50-x4 experiments.

\section{Additional studies of SEM}
In Section~\ref{sec-temp}, we discussed the effect of scaling $L$ and $V$ as well as changing the Softmax temperature during pre-training of the online network and changing the Softmax temperature for the downstream task. Here, we propose additional studies of SEM to provide a better mastery of the method. We provide a method for reducing the memory overhead of SEM and experiments demonstrating that despite this version still largely outperform the baseline. We additionally present the effect of modifying the embedder contributing to the insight on how to get the most out of SEM.
Next, we have discussion with a study of the spectrum of the covariance matrix of the SEM representation and the BYOL representation, showing insight how SEM can particularly improve the training signal during pre-training. We provide a scaling analysis of BYOL and BYOL + SEM on CIFAR-100. We end with an experiment showing that pre-training with SEM is necessary to get the best performance.

\subsection{An efficient variant of SEM}
\label{app:sec-memory_sem}

A large over-complete representation may induce a significant memory footprint due to the additional parameters of the fully connected linear layer used to map to and from the representation. For SEM we require two such mappings as depicted in Figure~\ref{fig:sem-byol} for BYOL. To reduce the amount of parameters, we propose to sparsify the weight matrix of the fully connected linear layer. We propose to do so by taking the block diagonal of the parameters of the matrix multiplication and setting the parameters outside the block diagonal to $0$. Formally, let $v\in\mathbb{R}^{b\times m}$, $w\in\mathbb{R}^{m\times o}$ and $y = v\cdot w$ be the fully connected matrix multiplication. Instead, we partition $v$ into $n$ blocks with $v^i\in\mathbb{R}^{b\times \frac{m}{n}}$ and define $n$ smaller $w^i\in\mathbb{R}^{\frac{m}{n}\times \frac{o}{n}}$, where $i\in[L]$ is the $i^{th}$ block. Then, we perform a batch matrix multiplication of $v^i$ and $w^i$ that we concatenate as follows: $y^i = v^i\cdot w^i$ and $\bar{y}^i = \text{Concat}([y^1,\dots,y^n])$. Thus, the amount of parameters of this matrix multiplication scales in $\mathcal{O}(\frac{m\cdot o}{n})$, allowing us to reduce the memory consumption by increasing $n$, the number of blocks.

 \begin{table}
    \centering
    \captionsetup{type=table}\caption{\# of parameters, \# of activations, allocated memory, computation efficiency (FLOPs/sample) and CIFAR-100 accuracy of BYOL, BYOL with SEM and its memory-efficient variant with $8$ blocks (denoted BYOL + SEM/8).}
      \label{tab:compute-efficient-sem}
    \begin{tabular}{lccccc}
              & \# params  & \# activations & vRAM (GiB)  & FLOPs & Accuracy \\ \hline
    \textit{Resnet-18}:\\
     BYOL        & $16.5M$ & $0.731M$ & $4.0$   & $7.20e8$  & $70.7$ \\
     BYOL+SEM   & $313.7M$ & $0.797M$ & $13.1$  & $1.01e9$ & $73.9$ \\
     BYOL+SEM/8 & $51.9M$ & $0.796M$ & $5.3$  & $7.46e8$  & $73.3$\\
    \hline
    \textit{Resnet-50}:\\
    BYOL       & $35M$ & $4.05M$ & $11.1$ & $1.65e9$ & $74.3$ \\
    BYOL+SEM   & $425.6M$ & $4.12M$ & $21.9$ & $2.04e9$ & $77.4$ \\
    BYOL+SEM/8 & $76.7M$ & $4.12M$ & $11.8$  & $1.69e9$ & $76.6$ \\ \hline
  \end{tabular}
\end{table}

We perform an experiment where we partition the embedder and the first linear layer of the projector into $8$ blocks. We present the results in Table~\ref{tab:compute-efficient-sem} in which we compare the \$ of parameters, the \# of activations, the allocated vRAM by pytorch, the FLOPs/sample and the accuracy of BYOL, BYOL+SEM and BYOL+SEM/8 representing the model with $8$ blocks obtained following the method described above. We observe that partitioning the matrix multiplications of SEM allows to vastly reduce the computation parameters while still yielding an important improvement over the baseline. This result demosntrate that SEM can be beneficial while inducing minimal computational overhead.

Attentive readers may notice that this performance is better compared to the ablation presented in Figure \ref{fig:emb_scale}. The difference in performance is due to probing the embedder's output (i.e. $z_\theta$) in Figure \ref{fig:emb_scale} and probing the encoder's output (i.e. $e_\theta$) in Table \ref{tab:compute-efficient-sem}. Using the each ablation's representation for probing to the other recovers the performance observed by each.

\subsection{Additional ablation of the SEM parameters}
\paragraph{Ablating the embedder}
In the main text, we mentioned that we use batch normalization at the output of the embedder. The reason we use batch normalization is mostly due to the fact that we wanted to avoid tuning any hyper-parameters that were not related to SEM to emphasize its contribution. Using BatchNorm gave the best performance without tuning the hyper-parameters of the baseline models.

Here, we want to emphasize that SEM can be used without batch norm, but more hyper-parameters might need to be tuned for it to perform as well as the model with batch norm in the encoder. For example, we found that using no weight decay was important to get better performance when we did not have batch normalization as illustrated in Table~\cref{tab:embedder_bn}. We leave the full study of the interaction of SEM with the SSL related parameters for future work.
\begin{table}[ht]
    \centering
        \caption{Understanding the relationship between the use of BatchNorm in the embedder and the weight decay hyper-parameter.}
    \label{tab:embedder_bn}
    \begin{tabular}{ccc}
         BatchNorm & weight decay &  Accuracy \\
         \hline
          & 0 &     $67.2$\\
          & 1e-5 &  $57.9$\\
         $\checkmark$ & 0 &    $68.3$\\
         $\checkmark$ & 1e-5 &  $73.9$\\
         \hline
    \end{tabular}
\end{table}

Another decision is to use a linear layer as the embedder. Other alternative may include using the Identidy function (i.e. the output of the encoder is used for SEM). However, if we want to systematically use the same encoder as the SSL model, then we are constrained to a representation size that is the one of the ResNet encoder (i.e. 512 for a ResNet-18). 

Finally, we showcase that using a more expressive embedder leads to exacerbated performance and recommend practitioner to limit the expressivity of their embedder.
\begin{table}[ht]
    \centering
        \caption{Comparing alternative embedders.}
    \label{tab:embedder_type}
    \begin{tabular}{cc}
         &  Accuracy\\
         \hline
         Identity & $63.0$ \\
         Linear   & $73.9$ \\
         1 hidden layer MLP &  $65.0$  \\
         \hline
    \end{tabular}
\end{table}

\subsection{Analyze of the spectrum of the covariance matrix of the representation}
To obtain a better insight on why the SEM representation leads to better downstream performance, we analyze the spectrum of the covariance matrix of the representation using the methodology presented in~\citet{jing2022understanding}. That is, we collect the embedding vectors of the test set of CIFAR-100 using a pre-trained model using ResNet-50. For BYOL, we have an additional embedder without softmax normalization (as done in Figure~\ref{fig:emb_scale}). For BYOL and BYOL+SEM we use the embedder's output ($z_\theta$) to perform the evaluation. To compute the covariance matrix $C\in\mathcal{R}^{L\cdot V\times L\cdot V}$ of the embedding layer $z$, we define $\bar z := \sum_{i=1}^N z_i/N$ the average representation over the N samples and compute the covariance as follows:
\begin{equation}
    C := \frac{1}{N}\sum_{i=1}^N (z_i - \bar z)(z_i - \bar z)^\top.
\end{equation}
To plot the spectrum of the covariance matrix, we take the singular value decomposition of the matrix ($C=USV^\top$) with S the diagonal of the singular values, which we plot in sorted order and logarithm scale in Figure~\ref{fig:eval_cov}.

This experiment demonstrates that the softmax normalization counters the dimensionality collapse that was discussed in~\citet{jing2022understanding}. Interestingly, the drop observed with SEM with $L\geq 500$ occurs at the index 2048 which is the dimensionality output of the ResNet-50 encoder.

\begin{figure}[ht]
    \centering
    \includegraphics[width=0.5\linewidth]{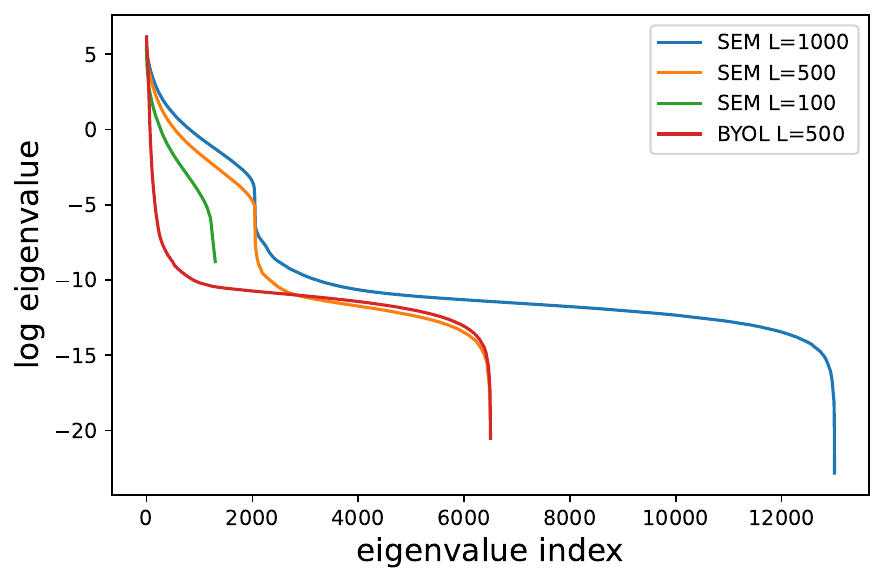}
    \caption{Spectrum of the covariance matrix of the represention for BYOL and BYOL + SEM obtained with a ResNet-50 encoder.}
    \label{fig:eval_cov}
\end{figure}

\subsection{Scaling the ResNet encoder for CIFAR-100}

\begin{figure}[ht]
    \centering
    \includegraphics[width=0.5\linewidth]{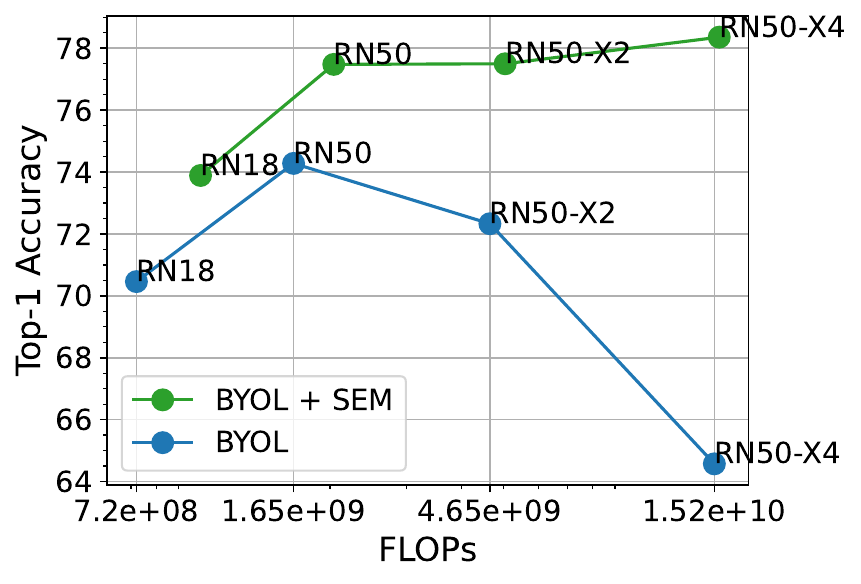}
    \caption{Scaling the ResNet encoder for CIFAR-100.}
    \label{fig:scaling_rn_cifar}
\end{figure}
We perform a scaling experiment on CIFAR-100 where we compare the scaling behaviour of BYOL and BYOL + SEM. We evaluate the computational cost of the methods and the resulting downstream accuracy for a range of four resnets: ResNet-18, ResNet-50, ResNet-50 x2 and ResNet-50 x4. In Figure~\ref{fig:scaling_rn_cifar}, we observe that SEM has a better scaling behaviour than the baseline, especially as we increase the width of the ResNet-50. For BYOL, we observe that the performance decays for ResNet-50 with width x2 and x4. This is not unprecendented, as prior works as demonstrated other methods where scaling up the capacity of a model led to decrease in performance. When comparing the discrepancy with Figure~\ref{fig:in-scaling}, we attribute that to the fact that CIFAR-100 is a small dataset. In fact, we observe that the training accuracy stays constant to about 79\% for all the ResNet-50 scales demonstrating overfitting for the baseline BYOL. Nevertheless, SEM prevents the decrease in performance and even lead to further improved performance as we increase the scale of the ResNet-50.

\subsection{The role of pre-training with SEM}
Here, we present the downstream accuracy obtained if one take a model pre-trained without SEM and add SEM normalization only for downstream classification. For this experiment, we take a pre-trained model with embedder (i.e. BYOL + embed) with $L=5000$ and $V=13$ and add the softmax normalization for downstream classification. We observe that such approach leads to an imprtant reduction in downstream accuracy in comparison to the model with SEM pre-training.
\begin{table}
\centering
\caption{Downstream accuracy of training a classifier with SEM normalization of the representation while using unormalized representation during pretraining. Experiments performed with a ResNet-50 encoder.}
\begin{tabular}{lccc}
\hline
Pre-train model & Probe location & SEM($\tau=0.1$) & Accuracy\\
BYOL + Embed & Encoder & No & 74.2 \\
BYOL + Embed & Embedder & No & 69.8 \\
BYOL + Embed & Embedder & Yes & 72.3 \\
BYOL + SEM & Embedder & Yes & 77.3 \\
\hline
\end{tabular}
\end{table}
\newpage
\section{CIFAR100 superclass}
The 100 classes of CIFAR-100~\citep{krizhevsky09cifar} are grouped into 20 superclasses. The list of superclass for each class in Table~\ref{tab:cifar_superclass}
\begin{table}[ht]
    \centering
        \caption{Set of classes for each superclass on CIFAR-100.}
    \label{tab:cifar_superclass}
    \begin{tabular}{ll}
         Superclass &	Classes \\
         \hline
aquatic mammals 	           & beaver, dolphin, otter, seal, whale\\
fish 	                       & aquarium fish, flatfish, ray, shark, trout\\
flowers             	       & orchids, poppies, roses, sunflowers, tulips\\
food containers 	           & bottles, bowls, cans, cups, plates\\
fruit and vegetables 	       & apples, mushrooms, oranges, pears, sweet peppers\\
household electrical devices &	clock, computer keyboard, lamp, telephone, television\\
household furniture 	       & bed, chair, couch, table, wardrobe\\
insects 	                   & bee, beetle, butterfly, caterpillar, cockroach\\
large carnivores            &	bear, leopard, lion, tiger, wolf\\
large man-made outdoor things &	bridge, castle, house, road, skyscraper\\
large natural outdoor scenes &	cloud, forest, mountain, plain, sea\\
large omnivores and herbivores &	camel, cattle, chimpanzee, elephant, kangaroo\\
medium-sized mammals 	       & fox, porcupine, possum, raccoon, skunk\\
non-insect invertebrates 	   & crab, lobster, snail, spider, worm\\
people 	                       & baby, boy, girl, man, woman\\
reptiles                    &	crocodile, dinosaur, lizard, snake, turtle\\
small mammals               &	hamster, mouse, rabbit, shrew, squirrel\\
trees                       &	maple, oak, palm, pine, willow\\
vehicles 1                     & bicycle, bus, motorcycle, pickup truck, train\\
vehicles 2                     & lawn-mower, rocket, streetcar, tank, tractor\\
\hline
    \end{tabular}
\end{table}

\newpage
\section{Additional CIFAR-100 coherence graphs}

\begin{figure}[b]
    \begin{subfigure}{0.33\linewidth}
        \centering
        \includegraphics[width=0.85\linewidth]{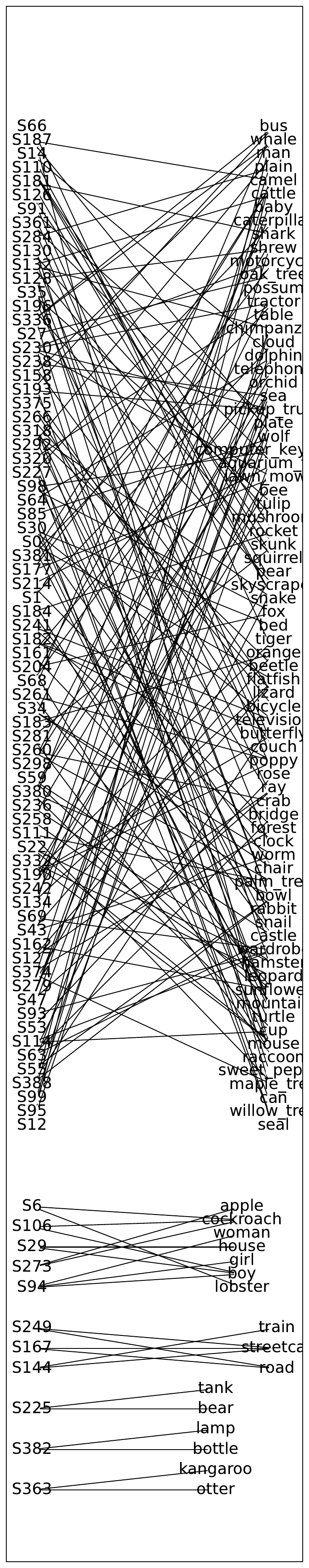}
        \caption{BYOL baseline\\\,}
        \label{fig:byol-rel}
    \end{subfigure}
    \begin{subfigure}{0.33\linewidth}
        \centering
        \includegraphics[width=0.85\linewidth]{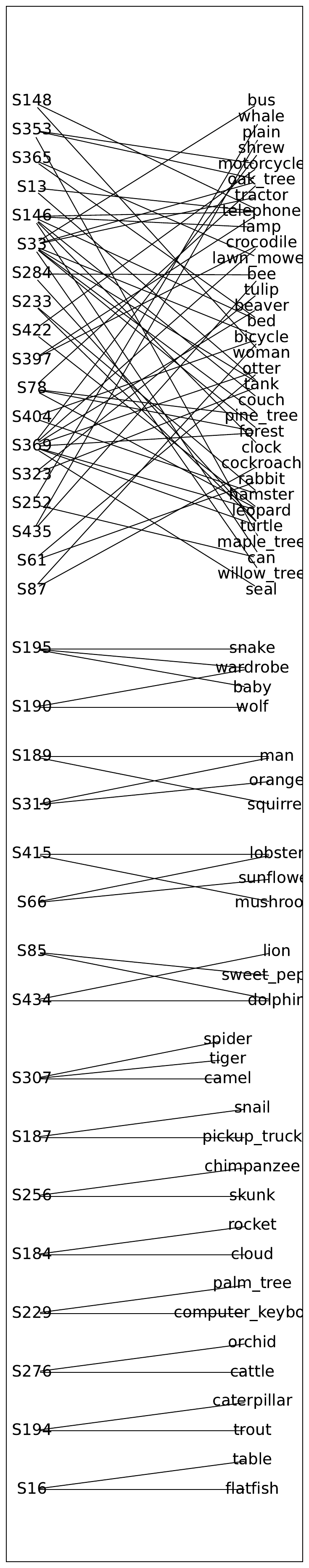}
        \caption{BYOL baseline with a large representation}
    \label{fig:byol-large-rel}
    \end{subfigure}
    \begin{subfigure}{0.33\linewidth}
        \centering
        \includegraphics[width=0.85\linewidth]{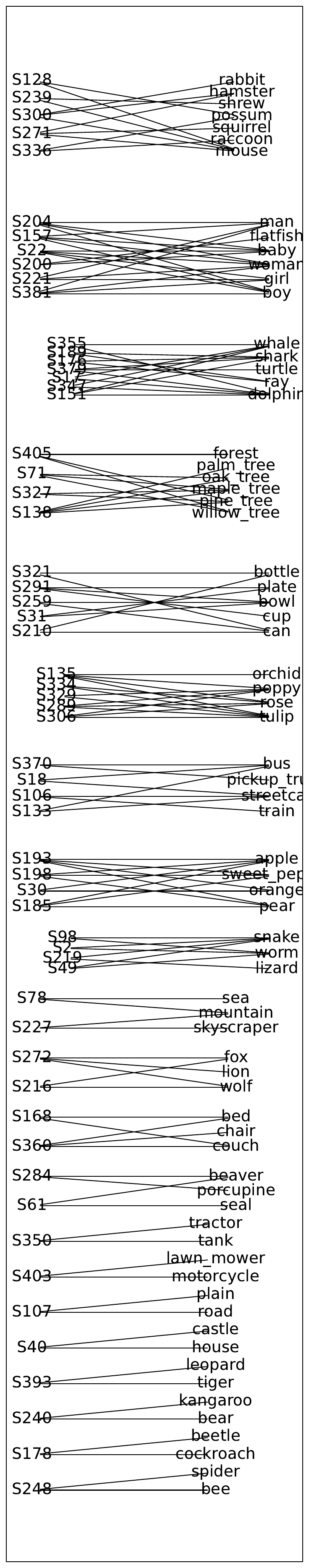}
        \caption{BYOL + SEM\\\,}
    \label{fig:byol-sem-rel}
    \end{subfigure}
    \caption{Comparison of the full semantic coherence graph $\mathcal W_5$ between BYOL and BYOL + SEM.}
    \label{fig:byol-full-rel}
\end{figure}

\end{document}